\newtheorem{prop}{Proposition}
\DeclareRobustCommand{\change}{%
  \@bsphack
  \leavevmode
  \color{blue}%
  \@esphack
}
\DeclareRobustCommand{\stopchange}{%
  \@bsphack
  \normalcolor
  \@esphack
}
\begin{document}
%
\title{Traffic signal prediction on transportation networks using spatio-temporal correlations on graphs}
%
%
%

\author{Semin Kwak,~
        Nikolas Geroliminis,~
        and~Pascal Frossard
}

\maketitle
\begin{abstract}
Multivariate time series forecasting poses challenges as the variables are intertwined in time and space, like in the case of traffic signals. Defining signals on graphs relaxes such complexities by representing the evolution of signals over a space using relevant graph kernels such as the heat diffusion kernel. However, this kernel alone does not fully capture the actual dynamics of the data as it only relies on the graph structure. The gap can be filled by combining the graph kernel representation with data-driven models that utilize historical data. This paper proposes a traffic propagation model that merges multiple heat diffusion kernels into a data-driven prediction model to forecast traffic signals. We optimize the model parameters using Bayesian inference to minimize the prediction errors and, consequently, determine the mixing ratio of the two approaches. Such mixing ratio strongly depends on training data size and data anomalies, which typically correspond to the peak hours for traffic data. The proposed model demonstrates prediction accuracy comparable to that of the state-of-the-art deep neural networks with lower computational effort. It notably achieves excellent performance for long-term prediction through the inheritance of periodicity modeling in data-driven models.

\end{abstract}

\begin{IEEEkeywords}
Multivariate time series forecasting, Bayesian inference, heat diffusion model, dynamic linear model.
\end{IEEEkeywords}

%
\IEEEpeerreviewmaketitle

\section{Introduction}
%
%
%
%
\IEEEPARstart{M}{ultivariate} time-series prediction is an important task since many real-life problems can be modeled within this framework, such as weather forecasting~\cite{wan2019multivariate, das2017sembnet, ouyang2017combined}, traffic prediction~\cite{wang2019multiple, huang2019dsanet, chandra2009predictions,mai2012multivariate,kwak2020travel,cavalcante2017lasso, nicholson2020high, li2018diffusion, cui2019traffic, chen2019gated, zhang2019spatial, yu2018spatio, zhao2019t,  wu2019graph, wu2020connecting, xu2018graph}, power consumption forecasting~\cite{du2020multivariate, nicholson2020high}, and others~\cite{yao2018deep, wu2020connecting, huang2019dsanet, munkhdalai2019end, du2003univariate}.
In transportation sensor networks, output signals from neighboring sensors may be similar or vastly different, as shown in Fig.~\ref{fig:sensornetwork}(a) and (b). Therefore, in this example, sensor A's signal can be utilized to predict sensor B's as the two signals are well correlated. However, the signal of sensor C is not correlated with that of sensor B, so it may not contribute to the prediction; 
Sensor C is located after an intersection, and most traffic demands flow in another direction in the intersection, therefore, the sensor rarely suffers congestion.
Naturally freeway congestion (expressed with a sharp decrease in the average speed of vehicles) is initiated at a bottleneck location such as an on-ramp merging area with high entrance flow or an incident location. Then, it propagates backwards with a finite speed, which is 3 to 4 times smaller than the speed of traffic. Fig.~\ref{fig:sensornetwork}(c) shows an example of congestion propagation in I-280 and I-880 freeways in California. Note that there is a drastic decrease in the speed at a location (sensor B) and a time (around 3 pm) that propagates through the traffic stream (this is called a shockwave). Once demand for travel decreases congestion disappears by following the opposite trend during the offset of congestion with a forward moving wave. Note that this propagation speed is not constant and depends on the concentration or density of vehicles (with units of veh/km) on the two sides of the shockwave. There are various theories in transportation science to describe the mechanisms of stop-and-go phenomena inspired by fluid and heat diffusion models (see \cite{helbing2001traffic} for an overview). 

Due to complex spatio-temporal correlation, the choice of model greatly influences the predictive performance.
For small-scale sensor networks, such correlations can be estimated directly from historical data~\cite{chandra2009predictions,mai2012multivariate,kwak2020travel,cavalcante2017lasso, nicholson2020high}.
The vector Auto Regression (AR) is a representative model for multivariate time series forecasting~\cite{kwak2020travel,cavalcante2017lasso, nicholson2020high}. 
In this model, regression parameters, or correlations between sensors, are estimated solely using historical data. In our previous work~\cite{kwak2020travel}, we implemented a predictor that explicitly expresses the periodicity of traffic signals with temporally localized vector AR model.
However, these data-driven models are not suitable for multivariate time series prediction with a large number of variables because the number of correlations to be estimated increases exponentially compared with the number of sensors, which causes incompleteness of the estimator (or overfitting).

Recently, many studies have prioritized the correlations among sensors by defining signals on graphs~\cite{li2018diffusion,yu2018spatio,cui2019traffic,zhao2019t,chen2019gated,zhang2019spatial, xu2018graph, wu2019graph, wu2020connecting}. 
In particular, in transportation networks, the physical travel distance between sensors is a critical {\it{a priori}} information, the closer the sensors are in space, the higher the correlation~\cite{battaglia2018relational}.
Utilizing this information, the authors had extracted the signal's spatial features through the heat propagation kernel (or convolutional filter) and passed it to temporal blocks for forecasting, such as recurrent neural network (RNN)~\cite{li2018diffusion, cui2019traffic, chen2019gated, zhang2019spatial} and temporal convolutional layer (TCN)~\cite{yu2018spatio, zhao2019t, wu2019graph, wu2020connecting}.
By introducing this prior information to complex deep neural networks, they  achieved state-of-the-art performance in traffic prediction.

\begin{figure}[!t]
   \centering
    \subfigure[Sensor locations of PEMS-BAY network. The distance between two consecutive sensors in a freeway is 0.6 mile in average. ]{{\includegraphics[width=0.9\columnwidth]{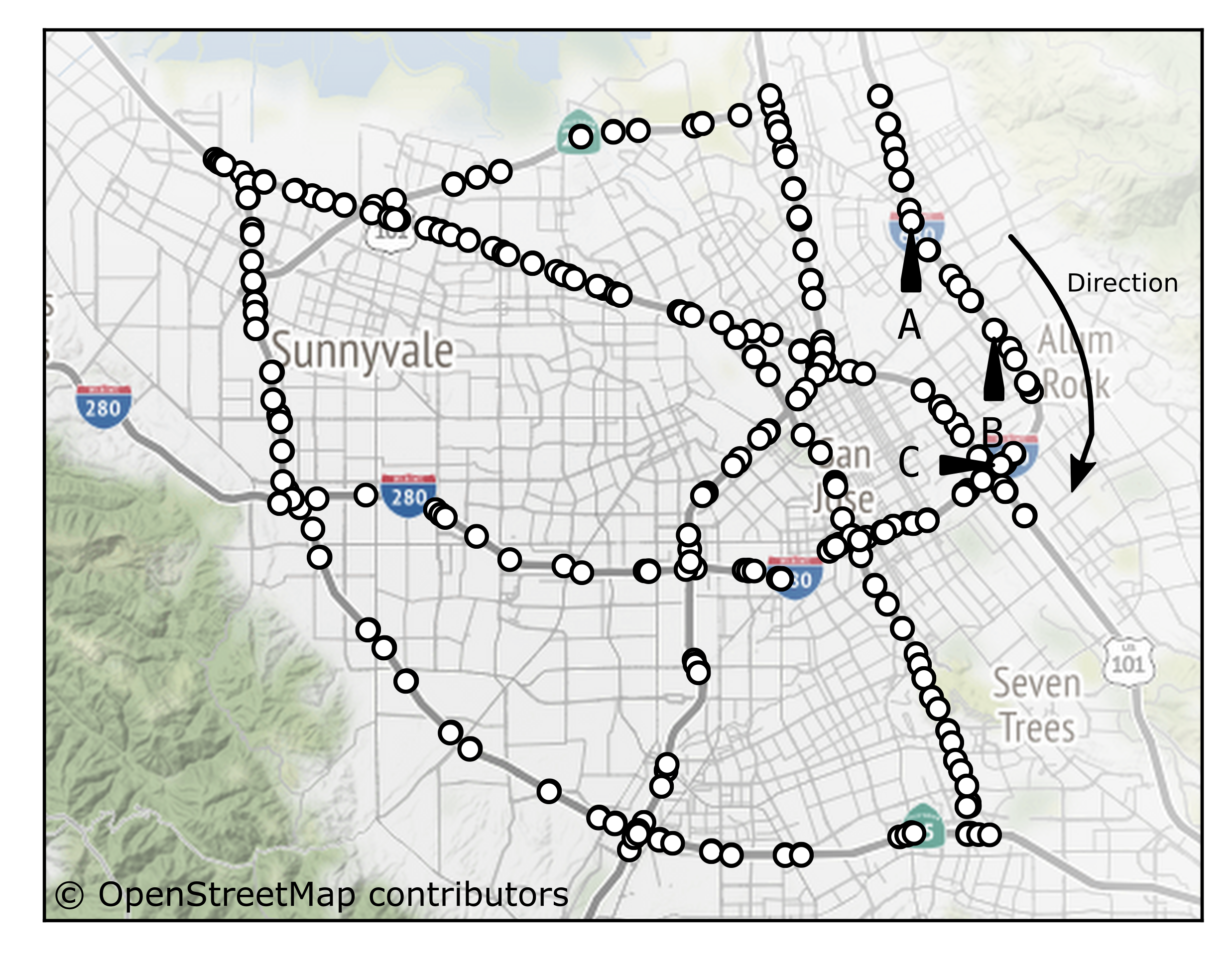}}}
   \subfigure[Signals on different sensors]{{\includegraphics[width=0.9\columnwidth]{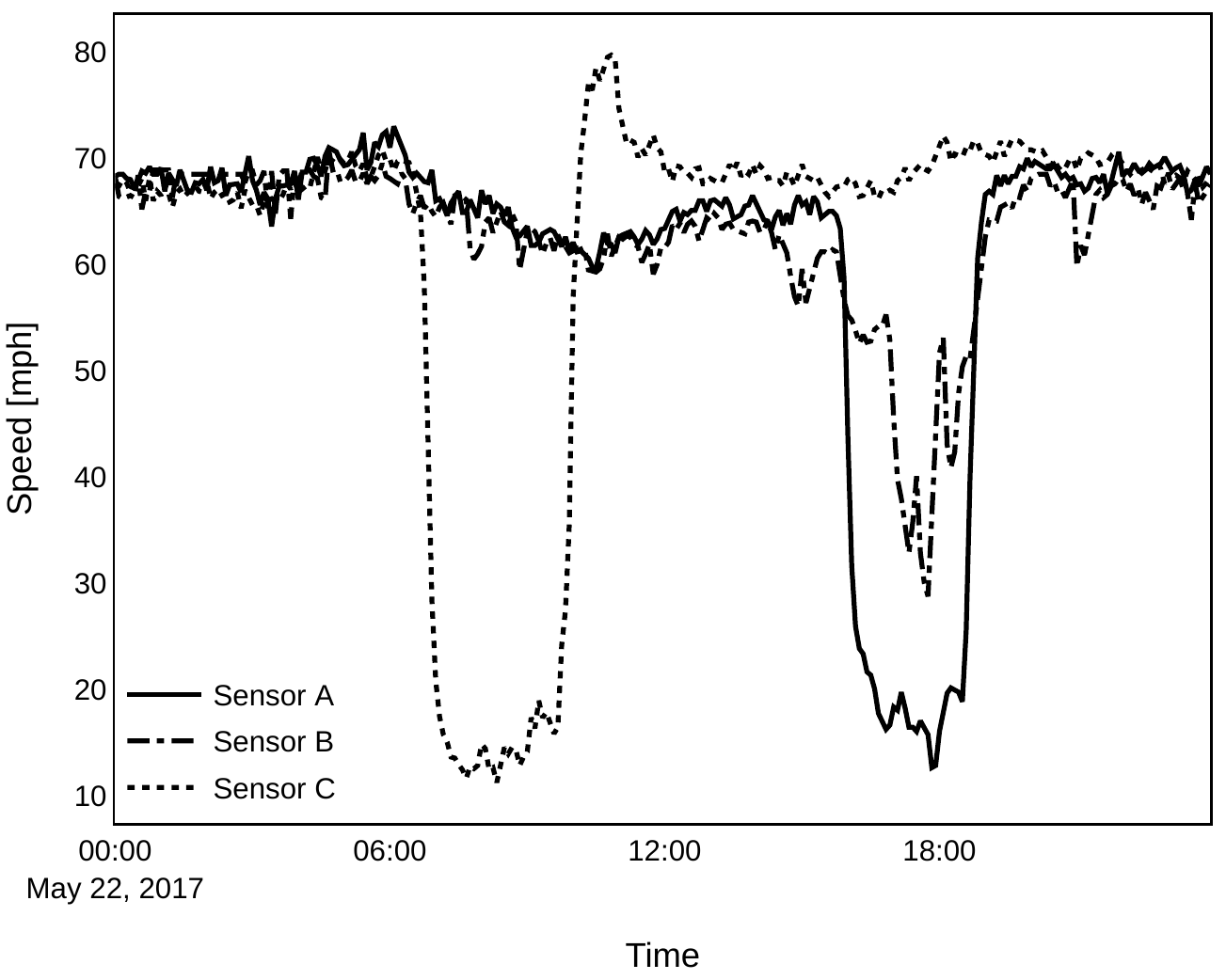}}}
   \subfigure[Speed profile for the evening peak over time and space. The day 2017-05-22 (Monday) is selected. The red dashed lines represent the waves that congestion propagates.]{{\includegraphics[width=\columnwidth]{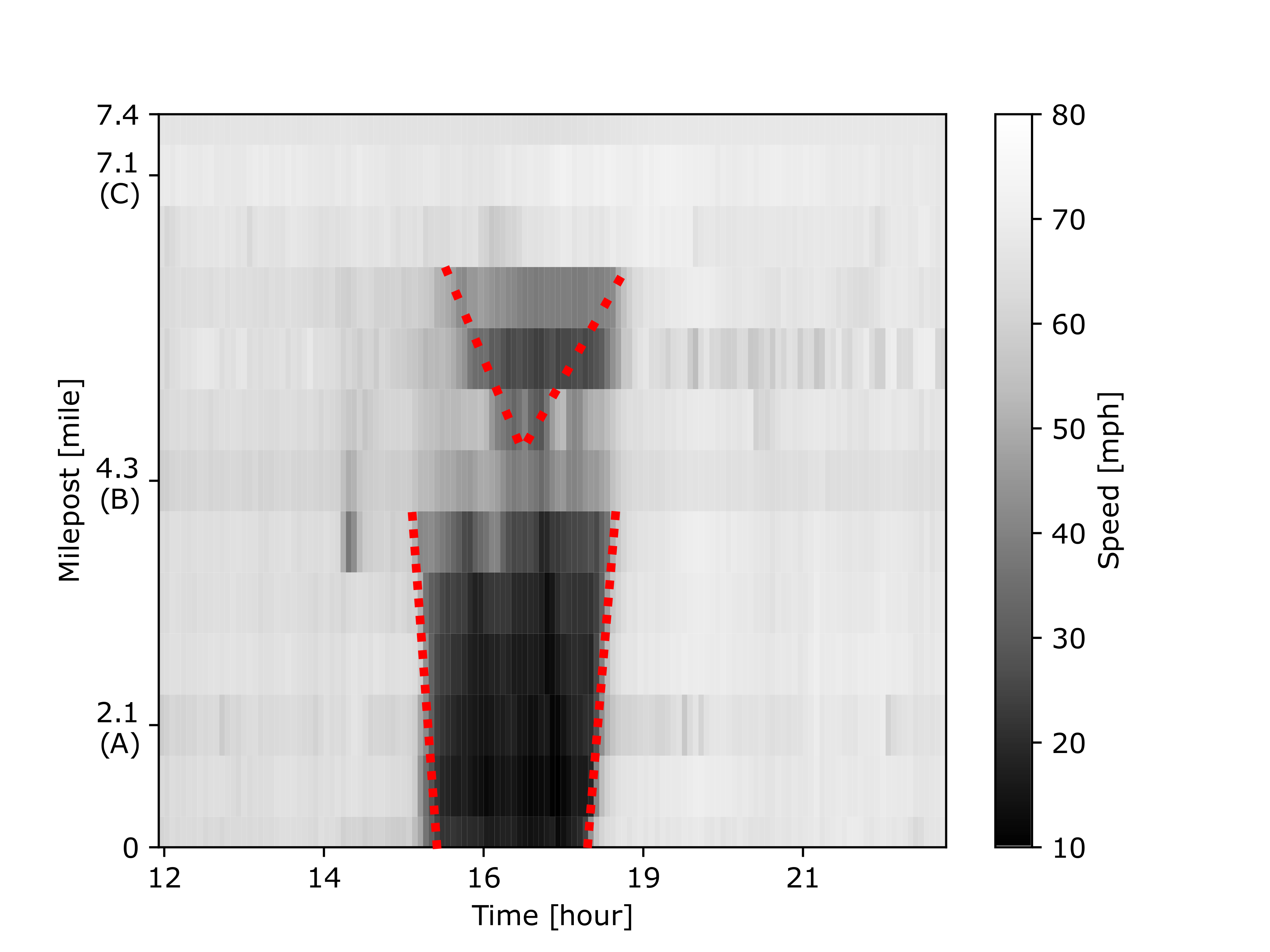}}}
   \caption{A transportation sensor network in California and signals of three different sensors on the network.
   Although the sensors B and C are close to each other in distance, two trafﬁc signals from these sensors show very different patterns.
  }
   \label{fig:sensornetwork}
\end{figure}

However, the two predictors (with and without graphs) each have their own drawbacks.
In the former case, to the best of our knowledge, all studies, which currently show the best performance, construct predictors based on deep neural networks. Therefore, these models require expensive tuning processes of many hyperparameters and relatively long training due to numerical optimization processes.
In the latter case, on the other hand, it can be inefficient concerning the prediction accuracy, especially for large networks when the structural information becomes important.

This paper proposes a new model that combines the advantages of different frameworks by implanting the sensors' structural information into the existing data-driven model~\cite{kwak2020travel}, inheriting the periodicity modeling for the traffic signal.
In most studies, the periodicity of the traffic signal is taken as the input feature of the predictor, such as an encoded vector that represents the time of the day or the day of the week, but the study~\cite{kwak2020travel} instead induces the periodicity of the signal more clearly by making the model itself different for each time.
Each model has a matrix, which should be estimated by historical data, representing the correlation between signals at two consecutive time intervals. As the size of the network is proportional to the size of the matrix, a larger network can lead to overfitting.
In this paper, we resolve the overfitting problem by approximating this matrix to the one derived from data-independent graph topological information, therefore, we estimate only the remainder by data.
In detail, we transform the graph topological information into heat diffusion kernels, which is introduced in~\cite{kondor2002diffusion}, and approximate the matrix to a combination of the heat diffusion kernels.
In the process, we introduce some hyper-parameters. For example, one determines which of the prior or historical datasets is more reliable. Most of the existing studies estimate hyper-parameters through exhaustive search as a cross-validation method using a validation set, but we estimate hyper-parameters directly from data by utilizing Bayesian inference~\cite{mackay1992bayesian}.
As a result, the estimation process is relatively fast as most parameter estimation is performed by analytic calculations except a few ones requiring a numerical optimization process.
Besides, our model is strongly interpretable. 
For example, through the hyper-parameter, it can be seen that during the peak period, traffic prediction is relatively more dependent on data than structural information compared to the non-peak period.
Also, most importantly, predictors based on this model showed comparable performance with a much shorter learning time than state-of-the-art models.
Especially, the proposed model shows great long-term prediction performance as the model captures well the periodicity of traffic signals.
Since the proposed model requires a minimal number of hyper-parameter tuning, it might be applied to other daily periodic graph signal prediction problems easily (e.g., weather forecasting, daily energy consumption prediction).
Here we summarize contributions of the work:
\begin{itemize}
    \item We propose a novel traffic prediction method that successfully integrate graph structural information to the existing data-driven model~\cite{kwak2020travel}. 
    Hyper-parameters are learned directly from data through Bayesian inference rather than by exhaustive search.
    \item Therefore, the training time required for inference is minimal. The trained model is straightforward to analyze, unlike other deep neural network-based models.
    \item It shows prediction performance comparable with deep learning methods especially for long-term prediction.
\end{itemize}

\begin{table}[t!]
\caption{The notations and definitions used in this article.}\label{table:definitions}
\renewcommand{\arraystretch}{1.3}
\begin{tabular}{p{1.5cm} p{6.5cm} }
\toprule
$\mathcal{R}^m$ & $m$-dimensional Euclidean space\\ 
$a,{\bf{a}}, {\bf{A}}$ & Scalar, vector, matrix\\
$\text{diag}({\bf{a}})$ & The diagonal matrix whose diagonal elements are from the vector ${\bf{a}}$\\
$\text{diag}({\bf{A}})$ & The vector whose elements are the diagonal components of the matrix ${\bf{A}}$\\
${\bf{I}}$ & Identity matrix\\
${\bf{1}}$ & All one vector\\
$e^{{\bf{A}}}$ & $\lim_{n\rightarrow\infty}\left({\bf{I}}+\frac{1}{n}{\bf{A}}\right)^n=\sum_{n=0}^\infty \frac{1}{n!}{\bf{A}}^n$\\
$[{\bf{A}}]_{i,j}$ & The element of $i$-th row and $j$-th column of the matrix ${\bf{A}}$\\
$[{\bf{A}}]_{i,:}$ & The slice of $i$-th row of the matrix ${\bf{A}}$ \\
$|{\bf{A}}|$ & The determinant of the matrix ${\bf{A}}$\\
$|\mathcal{S}|$ & The cardinality of the set $\mathcal{S}$\\
$\mathcal{N}(\mu,\sigma^2)$ & A Gaussian distribution which has the probability density function $f(x)=$ $\frac{1}{\sigma\sqrt{2\pi}}\exp\left({-\frac{1}{2}\left(\frac{x-\mu}{\sigma}\right)^2}\right)$\\
$\mathcal{N}({\boldsymbol{\mu}},{\boldsymbol{\Sigma}})$ & A multivariate Gaussian distribution which has the probability density function $f({\bf{x}})=\frac{1}{\sqrt{(2\pi)^N|{\boldsymbol{\Sigma}}|}}\exp\left({-\frac{1}{2}({\bf{x}}-{\boldsymbol{\mu}})^T{\boldsymbol{\Sigma}}^{-1}({\bf{x}}-{\boldsymbol{\mu}})}\right)$\\
$\mathcal{N}({\bf{M}},\sigma^2)$ & $\prod_{i,j}\mathcal{N}([{\bf{M}}]_{i,j},\sigma^2)$\\
$\mathcal{N}({\bf{M}},{\boldsymbol{\Sigma}})$ & $\prod_i \mathcal{N}([{\bf{M}}]_{i,:},{\boldsymbol{\Sigma}})$\\
\bottomrule
\end{tabular}
\end{table}

\section{Data model}
In this section, we describe a mathematical model that represents a relationship between traffic signals that are different in time. 
First, we define traffic signals on a graph and introduce an existing prediction model~\cite{kwak2020travel} using this signals. 
Then, we suggest a model extending the previous one that is applicable for large scale networks by exploiting graph information.
\subsection{Graph signal}
We start with modeling a transportation network using a graph. 
We define an undirected graph $\mathcal{G}=(\mathcal{V},\mathcal{E})$; $\mathcal{V}$ is a set of nodes where each $v\in\mathcal{V}$ denotes a node (sensor) on the graph; $\mathcal{E}$ is a set of edges where each of the edges connects two nodes. 
We define a signal on the nodes of the graph with a traffic feature, in this paper, for instance, speed, which is expressed as a vector ${\bf{x}}_t^d\in\mathcal{R}^{N}$ of a day~$d$ and time~$t$, where the constant $N$ is the number of nodes. 
Therefore, the vector ${\bf{x}}_t^d$ represents a snapshot of speeds at a particular time and day. 
Especially, we express the day index on the vector representation to exploit the periodicity of traffic signals later.

\subsection{Dynamic linear model (DLM)}
In our previous study~\cite{kwak2020travel}, we defined a state equation of traffic in a small-scale transportation network (a path graph) as temporally localized linear models as follows:
\begin{equation}\label{eqn:dlm}
    {\bf{x}}_{t+1}^d={\bf{H}}_t{\bf{x}}_t^d+{\bf{n}}_{t}^d,\forall t\in\left[0,T-1\right].
\end{equation}
We called this model the Dynamic linear model (DLM).
The first time index ($t=0$) corresponds to the beginning of a day (midnight in our work), and the last index ($t=T-1$) refers to the end of the day.
Each entry of the noise vector ${\bf{n}}_t^d\in\mathcal{R}^{N}$ is assumed to be an independent and identically distributed (i.i.d.) random variable, which follows a Gaussian distribution $\mathcal{N}(0,\alpha_t^{-1})$. Here the precision parameter $\alpha_t$ explains how precisely a data pair $({\bf{x}}_t^d, {\bf{x}}_{t+1}^d)$ fits to the model. 
The transition matrix ${\bf{H}}_t$ represents the linear relationship between traffic signals ${\bf{x}}_t^d$ and ${\bf{x}}_{t+1}^d$.

The most important motivation behind this model is that the propagation of traffic features over time occurs periodically on a daily basis.
Consequently, we modeled that the transition matrix ${\bf{H}}_t$ as a time-variant matrix that contains temporally localized (only between two consecutive traffic features) spatio-temporal correlations of every sensor pair regardless of the day of the week, noting that the transition matrix does not have the day index. In other words, we assumed the correlations are identical both for weekends and weekdays~\cite{kwak2020travel}.

In the work~\cite{kwak2020travel}, the transition matrix is estimated by maximizing the likelihood (note that we ignore some parameters such as the regularization parameter and the forgetting factor introduced in the work for the brevity) as follows:
\begin{equation}\label{eqn:likelihood_only}
\begin{aligned}
    \bar{\bf{H}}_t=\underset{{\bf{H}}_t}{\text{argmax }}f({\bf{X}}_{t+1}|{\bf{X}}_{t},{\bf{H}}_{t},\alpha_t)={\bf{X}}_{t+1}{\bf{X}}_t^T({\bf{X}}_{t}{\bf{X}}_t^T)^{-1},
\end{aligned}
\end{equation}
where the collection of the $m$-past signals ${\bf{X}}_t=\begin{pmatrix}{\bf{x}}_t^{0}&{\bf{x}}_t^{1}&\cdots&{\bf{x}}_t^{{m-1}}\end{pmatrix}$. Therefore, the optimal transition matrix is solely determined by the historical data ${\bf{X}}_t$ and ${\bf{X}}_{t+1}$. From Eq.~(\ref{eqn:likelihood_only}) we see that the matrix ${\bf{X}}_{t}{\bf{X}}_t^T$ can be an ill-conditioned matrix when $N$ is large. In other words, the transition matrix ${\bar{\bf{H}}}_t$ can be overfitted by data. In the following subsection, we suggest a method to avoid this problem by utilizing graph topological information.

\subsection{DLM with graph topological information}
In this subsection, we suggest a way to avoid the overfitting problem approximating the transition matrix to a heat diffusion matrix. To achieve this goal, we first define a weight matrix that contains all edge weights between node $v_i$ and $v_j$ using a Gaussian kernel weighting function with a threshold constant $\kappa$:
\begin{equation}\label{eqn:gaussianthreshold}
    [{\bf{W}}]_{i,j}=
    \begin{cases}
      e^{{-\frac{ \text{dist}^2(i,j)}{\sigma^2}}}, & \text{if}\ \text{dist}(i,j)\le\kappa \\
      0, & \text{otherwise}.
    \end{cases}
\end{equation}
The function $\text{dist}(i,j)$ denotes the shortest travel distance on $\mathcal{G}$ between the node $v_i$ and $v_j$:
\begin{equation}
    \text{dist}(i,j)=\min\{\text{dist}(v_i\rightarrow v_j),\text{dist}(v_j\rightarrow v_i)\},
\end{equation}
where the function $\text{dist}(v_i\rightarrow v_j)$ represents the shortest travel distance from node $v_i$ to node $v_j$.
As the graph $\mathcal{G}$ is undirected, the weight matrix is a symmetric matrix, i.e., ${\bf{W}}^T={\bf{W}}$.

The constants $\sigma$ and $\kappa$ are the kernel width and the distance threshold.
If the kernel width is large, the correlation of a pair of nodes is strong (close to one) even though the shortest travel distance between the two nodes is large. 
On the other hand, the smaller the threshold is, the sparser the weight matrix is.

The graph heat diffusion model~\cite{kondor2002diffusion} explains how each vertex propagates its heat to its neighbors on the graph over time.
As congestion evolves from one location to its neighbor over time, we can express the change of traffic features by the heat diffusion model, especially for short-term traffic changes since the total traffic volume of a network is well preserved for the short-term in general.

The kernel on graphs that supports the heat diffusion model is introduced by~\cite{kondor2002diffusion}:
\begin{equation}\label{eqn:heat_diffusion_model}
    {\bf{H}}^{\mathcal{G}}(\tau)=e^{-\tau{\bf{L(\mathcal{G})}}},
\end{equation}
where the constant $\tau$ denotes the diffusion period and the matrix ${\bf{L}}(\mathcal{G})$ is the Laplacian of a graph $\mathcal{G}$. The matrix is defined as
\begin{equation}\label{eqn:laplacian}
    {\bf{L}}(\mathcal{G})=\text{diag}({\bf{W}}{\bf{1}})-{\bf{W}}.
\end{equation}

By definition, two extreme heat diffusion kernels of a connected graph $\mathcal{G}$ are:
\begin{equation}\label{eqn:extreme_prior}
    {\bf{H}}^{\mathcal{G}}(\tau)=
    \begin{cases}
      {\bf{I}}, & \text{when}\ \tau\rightarrow0, \\
      \frac{1}{N}{\bf{11}}^T, & \text{when}\ \tau\rightarrow\infty,
    \end{cases}
\end{equation}
where ${\bf{1}}$ is the vector whose elements are all one.

Therefore, with the heat diffusion kernel, we can describe the diffusion of a traffic signal through the graph $\mathcal{G}$ as follows: 
\begin{equation}\label{eqn:internal_diffusion}
    {\tilde{\bf{x}}}_{t+1}^d(\tau)={\bf{H}}^{\mathcal{G}}(\tau){\bf{x}}_t^d.
\end{equation}
We call the vector ${\tilde{\bf{x}}}^d_{t+1}(\tau)$ the internally diffused signals from ${\bf{x}}_t^d$ by the diffusion period $\tau$ on the graph $\mathcal{G}$ over one incremental time step.

Here, we define a convex combination of the heat diffusion kernels of $K$ different predetermined diffusion periods with a set $\mathcal{T}=\{\tau^{(0)},\tau^{(1)},\cdots,\tau^{(K-1)}\}$\footnote{We predetermine the set $\mathcal{T}$ with two diffusion periods $\tau_0$ and $\tau_\infty$ that correspond to each extreme case in Eq.~(\ref{eqn:extreme_prior}), respectively. In practice, we set $\tau_0$ as the biggest one that satisfies $\left\Vert{\bf{H}}^{\mathcal{G}}(\tau)-{\bf{I}}\right\Vert_2<\epsilon$ and $\tau_\infty$ as the smallest one that satisfies $\left\Vert{\bf{H}}^{\mathcal{G}}(\tau)-1/N{\bf{1}}{\bf{1}}^T\right\Vert_2<\epsilon$ with a predefined set $\tau\in\text{linspace(-10,10,0.1)}$, where the set contains evenly spaced ($0.1$) numbers from $-10$ to $10$.
After that, we define
$\mathcal{T}=\text{logspace}(\tau_0,\tau_\infty,K),$ where the function returns $K$ evenly spaced numbers on a log scale from $\tau_0$ to $\tau_\infty$.} as
\begin{equation}\label{eqn:diffusion_matrix}
    {\bf{H}}^{\mathcal{G}}(\mathcal{T})=\sum_{\tau\in\mathcal{T}}\pi^{(\tau)}{\bf{H}}^{\mathcal{G}}(\tau),
\end{equation}
where $\sum_{\tau\in\mathcal{T}}\pi^{(\tau)}=1$.
The mixture retains the property that the total input volume is preserved through the diffusion process as shown in Appendix~\ref{apdx:volume_conservation_heat_diffusion}, i.e.,
${\bf{1}}^T{\bf{H}}^{\mathcal{G}}(\mathcal{T}){\bf{x}}_t^d={\bf{1}}^T{\bf{x}}_t^d$.

We embed heat diffusion kernels into DLM to exploit topological information of the transportation network.
The key idea is to express the transition matrix as a small variant from a mixture of diffusion kernels.
We decompose the transition matrix into the time-variant internal diffusion and 
residual
\stopchange
as follows:
\begin{equation}\label{eqn:transition_matrix_decomposition}
{\bf{H}}_t={\bf{H}}_t^{\mathcal{G}}(\mathcal{T})+\text{residual}
\end{equation}
so that the internal diffusion matrix ${\bf{H}}_t^{\mathcal{G}}(\mathcal{T})$ preserves the total traffic volume over time, i.e., ${\bf{1}}^T{\bf{H}}_t^{\mathcal{G}}(\mathcal{T}){\bf{x}}_{t}^d={\bf{1}}^T{\bf{x}}_{t}^d$.
Here, the time dependent internal transition matrix can be safely defined as in Eq.~(\ref{eqn:diffusion_matrix}) by substituting the time-invariant parameter $\pi^{(\tau)}$ for the time-variant one $\pi_t^{(\tau)}$ because of the volume conservation property.
The internal diffusion matrix represents how the current signal ${\bf{x}}_t^d$ diffuses through the transportation network (endogenous) whereas
the residual represents how much the traffic situation is getting better or worse in the next time step based on the current signal (exogenous).

With this interpretation, we model the prior distribution of the transition matrix as:
\begin{equation}\label{eqn:prior_assumption}
    f({\bf{H}}_t|\gamma_t,\Pi_t,\mathcal{G})=\mathcal{N}\left({\bf{H}}_t^{\mathcal{G}}(\mathcal{T}),\gamma_t^{-1}\right),
\end{equation}
where the precision parameter $\gamma_t$ represents how precisely the diffusion matrix explains the transition matrix and $\Pi_t=\{\pi_t^{(\tau)}|\tau\in\mathcal{T}\}$.

The decomposition allows us to utilize data more efficiently during the estimation process later.
In Eq.~(\ref{eqn:dlm}), the transition matrix is a variable to be estimated from the data. Since the dimension of this matrix is $N^2$, an increase in the number of sensors causes the estimation of more elements, which results in an overfitting problem. This is the biggest impediment to extending DLM to large networks. Still, if the structural information is set as {\it{a priori}} through Eq.~(\ref{eqn:prior_assumption}), the problem can be effectively avoided even if the number of sensors increases.
Assuming the graph $\mathcal{G}$ and the period set $\mathcal{T}$ are predefined, the internal diffusion matrix only depends on the parameters $\pi_t^{(\tau)}$. By setting the number of diffusion periods to be much smaller than that of sensors i.e., $|\mathcal{T}|\ll N$, we can describe the major part of the transition matrix by the internal diffusion matrix with a few parameters when the sampling interval (the time difference of two consecutive time indices) is relatively short, with likely preservation of the traffic volumes, i.e., ${\bf{1}}^T{\bf{x}}_{t+1}^d\approx {\bf{1}}^T{\bf{x}}_{t}^d$.
Consequently, we only need to exploit data to infer the parameters $\pi_t^{(\tau)}$ and the residual part whose norm is small with the decomposition.

\section{Prediction and inference}
This section describes how to estimate modeling parameters and predict graph signals by using the model. Both the estimation and the prediction were performed by maximizing the posterior distribution of each variable.
Especially for hyperparameters, we utilize Bayesian inference to estimate them instead of exhaustive search.

\subsection{Inference of the transition matrix}
We infer the transition matrix by maximizing its posterior distribution:
\begin{equation}
    {\hat{\bf{H}}}_t=\underset{{\bf{H}}_t}{\text{argmax }}f({\bf{H}}_t|{\bf{X}}_{t},{\bf{X}}_{t+1},\alpha_t,\gamma_t,\Pi_t,\mathcal{G}),
\end{equation}
which is proportional to the product of the prior and the likelihood by Bayes' rule:
\begin{equation}\label{eqn:bayes_rule}
\text{Posterior dist.}\propto f({\bf{H}}_{t}|\gamma_t,\Pi_t,\mathcal{G})f({\bf{X}}_{t+1}|{\bf{X}}_{t},{\bf{H}}_{t},\alpha_t).
\end{equation}

\begin{algorithm}[t!]
  \caption{Inference of parameters}\label{algo:inference_of_parameters}
  \begin{algorithmic}[1]
    \Function{Inference}{${\bf{W}},K,{\bf{X}}_{1:T}$}
    \State Set $\mathcal{T}=\text{logspace}(\tau_0,\tau_\infty,K)$
    \State Define ${\bf{L}}(\mathcal{G})$ by Eq.~(\ref{eqn:laplacian})
    \State Define the function ${\bf{H}}^\mathcal{G}(\tau)=e^{-\tau{\bf{L}}(\mathcal{G})}$
    \For{$t \in [0,T-2]$}
    \State Infer $\hat\alpha_t$, $\hat\gamma_t$ and $\hat\Pi_t$ by solving~(\ref{eqn:evidence_maximization})
    \State Infer $\hat{\bf{H}}_t$ by Eq.~(\ref{eqn:probable_H})
    \EndFor\\
    \Return $\hat{\bf{H}}_t, \forall t$
    \EndFunction
  \end{algorithmic}
\end{algorithm}

Maximizing the posterior distribution can be interpreted as balancing between the prior and likelihood of the transition matrix.
For example, if there is no topological information about sensors, the transition matrix should be inferred by considering the training dataset only. 
In this case, we can set the prior distribution as a uniform distribution, meaning that there is no strong preference for a particular value of the transition matrix; the most probable transition matrix becomes the maximum likelihood solution, which is Eq.~(\ref{eqn:likelihood_only}):
\begin{equation}
\begin{aligned}
    \hat{\bf{H}}_t|{\text{No topological info.}}&:=\bar{\bf{H}}_t\\
    &=\underset{{\bf{H}}_t}{\text{argmax }}f({\bf{X}}_{t+1}|{\bf{X}}_{t},{\bf{H}}_{t},\alpha_t)\\
    &={\bf{X}}_{t+1}{\bf{X}}_t^T({\bf{X}}_{t}{\bf{X}}_t^T)^{-1}.
\end{aligned}
\end{equation}
On the other hand, if we do not have any measurements, the most probable transition matrix should be the maximizer of the prior distribution:
\begin{equation}\label{eqn:prior_only}
    \hat{\bf{H}}_t|{\text{No measurements}}=\underset{{\bf{H}}_t}{\text{argmax }}f({\bf{H}}_t|\gamma_t,\Pi_t,\mathcal{G})={\bf{H}}_t^{\mathcal{G}}(\mathcal{T}).
\end{equation}

Since we use both prior and data measurements, the actual optimal transition matrix becomes a combination of these two.
According to the dynamic linear model, the likelihood
\begin{equation}\label{eqn:likelihood_dist}
\begin{aligned}
&f({\bf{X}}_{t+1}|{\bf{H}}_t,{\bf{X}}_t,\alpha_t)
\\&\qquad\qquad\propto e^{-\frac{1}{2}\text{tr}\{\alpha_t({\bf{X}}_{t+1}-{\bf{H}}_t{\bf{X}}_t)({\bf{X}}_{t+1}-{\bf{H}}_t{\bf{X}}_t)^T\}}
\end{aligned}
\end{equation}
and the prior
\begin{equation}\label{eqn:prior_dist}
f({\bf{H}}_t|\gamma_t,\Pi_t,\mathcal{G})\propto e^{-\frac{1}{2}\text{tr}\{\gamma_t({\bf{H}}_t-{\bf{H}}_t^{\mathcal{G}}(\mathcal{T}))({\bf{H}}_t-{\bf{H}}_t^{\mathcal{G}}(\mathcal{T}))^T\}}.
\end{equation}
Therefore, by Eq.~(\ref{eqn:bayes_rule}), (\ref{eqn:likelihood_dist}) and (\ref{eqn:prior_dist}),
\begin{equation}
\begin{aligned} f&({\bf{H}}_t|{\bf{X}}_{t+1},{\bf{X}}_t,\alpha_t,\gamma_t,\Pi_t,\mathcal{G})\\
&\propto e^{-\frac{1}{2}\alpha_t\text{tr}\{({\bf{X}}_{t+1}-{\bf{H}}_t{\bf{X}}_t)({\bf{X}}_{t+1}-{\bf{H}}_t{\bf{X}}_t)^T\}}\\
&\qquad\cdot e^{-\frac{1}{2}\gamma_t\text{tr}\{({\bf{H}}_t-{\bf{H}}_t^{\mathcal{G}}(\mathcal{T}))({\bf{H}}_t-{\bf{H}}_t^{\mathcal{G}}(\mathcal{T}))^T\}}\\
&\propto e^{-\frac{1}{2}\text{tr}\{({\bf{H}}_t-\hat {\bf{H}}_t)(\alpha_t{\bf{X}}_t{\bf{X}}_t^T+\gamma_tI)({\bf{H}}_t-\hat {\bf{H}}_t)^T\}},
\end{aligned}
\end{equation}
where 
\begin{equation}\label{eqn:probable_H}
    \begin{aligned}
    \hat{\bf{H}}_t&=({\bar{\bf{H}}}_t\alpha_t{\bf{U}}_t{\bf{\Lambda}}_t+{{\bf{H}}_t^{\mathcal{G}}(\mathcal{T})}\gamma_t{\bf{U}}_t)(\alpha_t{\bf{\Lambda}}_t+\gamma_t{\bf{I}})^{-1}{\bf{U}}_t^T\\
&=\bar{\bf{H}}_t\alpha_t{\bf{U}}_t{\bf{\Lambda}}_t(\alpha_t{\bf{\Lambda}}_t+\gamma_t{\bf{I}})^{-1}{\bf{U}}_t^T\\
&\qquad\qquad+{\bf{H}}_t^\mathcal{G}(\mathcal{T})\gamma_t{\bf{U}}_t(\alpha_t{\bf{\Lambda}}_t+\gamma_t{\bf{I}})^{-1}{\bf{U}}_t^T,
    \end{aligned}
\end{equation}
with the eigendecomposition of ${\bf{X}}_t{\bf{X}}_t^T={\bf{U}}_t{\bf{\Lambda}}_t{\bf{U}}_t^T$. 
Therefore, $f({\bf{H}}_t|{\bf{X}}_{t+1},{\bf{X}}_t,\alpha_t,\gamma_t,\Pi_t,\mathcal{G})$ is a multivariate Gaussian distribution with mean $\hat {\bf{H}}_t$ and the covariance of each row; $(\alpha_t{\bf{X}}_t{\bf{X}}_t^T+\gamma_tI)^{-1}$.

Here, we measure how much each part contributes to the transition matrix
\begin{equation}\label{eqn:data_contribution}
\begin{aligned}
    c^\text{data}_t&=\frac{w^\text{data}_t}{w^\text{data}_t+w^\text{prior}_t},\;c^\text{prior}_t&=\frac{w^\text{prior}_t}{w^\text{data}_t+w^\text{prior}_t}
\end{aligned}
\end{equation}
by defining the weight of each part
\begin{equation}
\begin{aligned}
    w^\text{data}_t&=\left\Vert\alpha_t{\bf{U}}_t{\bf{\Lambda}}_t(\alpha_t{\bf{\Lambda}}_t+\gamma_t{\bf{I}})^{-1}{\bf{U}}_t^T\right\Vert_F,\\
    w^\text{prior}_t&=\left\Vert\gamma_t{\bf{U}}_t(\alpha_t{\bf{\Lambda}}_t+\gamma_t{\bf{I}})^{-1}{\bf{U}}_t^T\right\Vert_F.
\end{aligned}
\end{equation}
Note that these weights depend on the precision parameters $\alpha_t$ and $\gamma_t$.
If the data precision parameter $\alpha_t$ is relatively large compared to $\gamma_t$,
then $c^\text{data}_t>c^\text{prior}$, meaning that the contribution of data measurements is larger than that of the prior information.

\subsection{Inference of other parameters}

For the next step, we infer parameters $\alpha_t$, $\gamma_t$, and $\Pi_t$.
Similar to inferring the most probable transition matrix, we infer the most probable $\alpha_t$, $\gamma_t$, and $\Pi_t$ by maximizing the following posterior distribution:
\begin{equation}
    \hat\alpha_t,\hat\gamma_t,\hat{\Pi}_t=\underset{\alpha_t,\gamma_t,{\Pi}_t}{\text{argmax }}f(\alpha_t,\gamma_t,{\Pi}_t|{\bf{X}}_{t+1},{\bf{X}}_t).
\end{equation}
Setting the prior distribution $f(\alpha_t,\gamma_t,{\Pi}_t)$ as a uniform distribution based on the assumption that there is no preference for a certain value for these parameters before inferring, the objective changes to maximize {\it{evidence}} $f({\bf{X}}_{t+1}|{\bf{X}}_t,\alpha_t,\gamma_t,{\Pi}_t)$~\cite{mackay1992bayesian} since
\begin{equation}
\begin{aligned}
f(\alpha_t,\gamma_t,{\Pi}_t|{\bf{X}}_{t+1},{\bf{X}}_t)&\propto f({\bf{X}}_{t+1}|{\bf{X}}_t,\alpha_t,\gamma_t,{\Pi}_t)f(\alpha_t,\gamma_t,{\Pi}_t)
\\&\propto f({\bf{X}}_{t+1}|{\bf{X}}_t,\alpha_t,\gamma_t,{\Pi}_t).\end{aligned}
\end{equation}
In Appendix~\ref{apdx:evidence}, we show that the evidence is
\begin{equation}\label{eqn:weighted_average}
\begin{aligned}
    f({\bf{X}}_{t+1}|&{\bf{X}}_t,\alpha_t,\gamma_t,{\Pi}_t)\\
    &=\int f({\bf{X}}_{t+1}|{\bf{X}}_t,{\bf{H}}_t,\alpha_t) f({\bf{H}}_t|\gamma_t,{\Pi}_t)d{\bf{H}}_t\\
    &=\mathcal{N}({\bf{H}}_t^{\mathcal{G}}(\mathcal{T}){\bf{X}}_t,\alpha_t^{-1}{\bf{I}}+\gamma_t^{-1}{\bf{X}}_t^T{\bf{X}}_t).
\end{aligned}
\end{equation}
Therefore, we infer the most probable hyper-parameters by maximizing the log-evidence with a quasi-newton method (L-BFGS-B~\cite{byrd1995limited}):
\begin{equation}\label{eqn:evidence_maximization}
\begin{aligned}
    &\underset{\alpha_t,\gamma_t,{\Pi}_t}{\text{maximize }}& &\log\mathcal{N}({\bf{H}}_t^{\mathcal{G}}(\mathcal{T}){\bf{X}}_t,\alpha_t^{-1}{\bf{I}}+\gamma_t^{-1}{\bf{X}}_t^T{\bf{X}}_t)
    \\&\text{subject to }& & 0\le\pi_t^{(\tau)}\le 1\;\forall \tau\in\mathcal{T},\;0<\alpha_t,\;0<\gamma_t,
    \\& & &\sum_{\tau\in\mathcal{T}}\pi_t^{(\tau)}=1.
\end{aligned}
\end{equation}
Algorithm~\ref{algo:inference_of_parameters} summarizes the inference processes.

We emphasize that parameter inference through evidence maximization prevents overfitting of the transition matrix to either data measurements or prior information.
In Eq.~(\ref{eqn:weighted_average}) we calculate the evidence by marginalizing the transition matrix. In other words, we set the transition matrix as a {\it{random variable}} instead of fixing it as a representative value, e.g., maximum likelihood estimator. Noting that these parameters determine the contributions of measurements and priors when the transition matrix is estimated in Eq.~(\ref{eqn:probable_H}), the marginalization process automatically penalizes the transition matrix to avoid the extreme cases~\cite{mackay1992bayesian}.

\begin{algorithm}[t!]
\caption{Prediction of traffic features ($h$-steps ahead)}\label{algo:prediction}
\begin{algorithmic} 
\Function{Prediction}{${\bf{x}}_t^d,h, \hat{\bf{H}}_t,\cdots,\hat{\bf{H}}_{t+h-1}$}
\State Set ${\bf{p}}={\bf{x}}_t^d$
\For{$i \in [0,h-1]$}
\State Set ${\bf{p}}=\hat{\bf{H}}_{t+i}{\bf{p}}$ 
\EndFor
\State ${\bf{x}}_{t+h|t}={\bf{p}}$\\
\Return ${\bf{x}}_{t+h|t}$
\EndFunction
\end{algorithmic}
\end{algorithm}

\subsection{Prediction of traffic features}
Prediction of traffic features is performed by extracting and exploiting as much information as possible from measurements and prior knowledge. Mathematically, we can express a traffic signal that we want to predict as a random variable since the signal defined in the future is entirely unknown. In this paper, therefore, we try to infer the probability density function of the signal ${\bf{x}}_{t+h}^d$
\begin{equation}\label{eqn:basic_fw}
    f({\bf{x}}_{t+h}^d|{\bf{x}}_{t}^d,{\bf{x}}_{t-1}^d\cdots,\mathcal{G}),
\end{equation} 
where the time indices $t$ and $t+h$ represent respectively the current time and the future time index ($h$-steps ahead) that we want to predict. In the expression, the probability density function is conditioned by the signals $\{{\bf{x}}_t^d,{\bf{x}}_{t-1}^d,\cdots\}$ and the graph $\mathcal{G}$ that represents a set of measurements and prior structural information, respectively.

\begin{figure}[!t]
\centering
 {{\includegraphics[width=0.9\columnwidth]{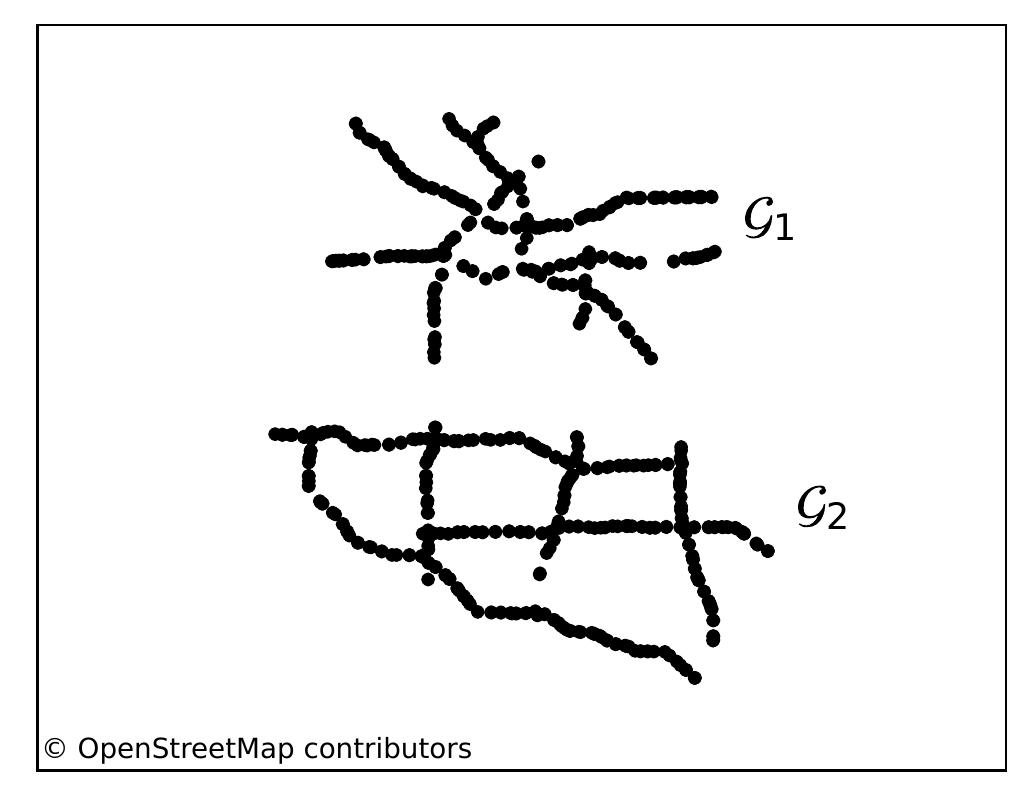}}}
\caption{Transportation sensor networks (District 7 area in California) that are used for evaluating the proposed method.}\label{fig:test_sites}
\end{figure}

In reality, it is common to limit the number of measurements to a fixed-sized one in a training set. 
In addition to the training set that contains measurements apart from the day to be predicted, it is crucial to keep measurements just before $t$, as the temporal correlation is strong when the time difference is small.
As a result, we estimate the density function that is conditioned by a training set, the $p$-most recent measurements, and the graph $\mathcal{G}$:
\begin{equation}\label{eqn:limited_pdf_by_training_set}
    f({\bf{x}}_{t+h}^d|{\bf{x}}_{t}^d,{\bf{x}}_{t-1}^d,\cdots,{\bf{x}}_{t-(p-1)}^d,{\bf{X}}_{0:T-1},\mathcal{G}),
\end{equation}
where the training set ${\bf{X}}_{0:T-1}$ contains signals from $t=0$ to $t=T-1$ of multiple days $d\in[0,m-1]$.
The dynamic linear model further simplifies the distribution~(\ref{eqn:limited_pdf_by_training_set}) as follows
\begin{equation}
    f({\bf{x}}_{t+h}^d|{\bf{x}}_{t}^d,{\bf{X}}_{t:t+h},\mathcal{G})
\end{equation}
because of the temporal locality of the model.

We define a predictor ${\bf{x}}_{t+h|t}^d$ at the time step $t$ for the horizon $h$ as the maximizer of the probability density function
 \begin{equation}
     {\bf{x}}^d_{t+h|t}:=\underset{{\bf{x}}^d_{t+h}}{
     \text{argmax}}
     f({\bf{x}}_{t+h}^d|{\bf{x}}_{t}^d,{\bf{X}}_{t:t+h},\mathcal{G}).
 \end{equation}
In other words, we define the predictor ${\bf{x}}^d_{t+h|t}$ as the most probable ${\bf{x}}^d_{t+h}$ based on the current measurement vector ${\bf{x}}^d_t$, the training set ${\bf{X}}_{t:t+h}$, and the graph $\mathcal{G}$.

\begin{prop}\label{prop:1}
$f({\bf{x}}^d_{t+h}|{\bf{x}}^d_t,{\bf{X}}_{t:t+h},\mathcal{G})$ is a Gaussian distribution that has the mean vector $\hat {\bf{H}}_{t+h-1}\cdots\hat {\bf{H}}_t{\bf{x}}^d_t$ assuming $f({\bf{H}}_t|{\bf{X}}_{t},{\bf{X}}_{t+1},\alpha_t,\gamma_t,\Pi_t,\mathcal{G})=\delta({\bf{H}}_t-\hat {\bf{H}}_t)$, where the Dirac delta function $\delta(x)=1$ when $x=0$ and $\delta(x)=0$, otherwise. The most probable transition $\hat{\bf{H}}_t$ is the maximizer of the posterior distribution $f({\bf{H}}_t|\cdot)$.
\end{prop}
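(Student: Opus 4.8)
The plan is to use the Dirac-delta assumption to strip the randomness out of the transition matrices, and then to propagate the resulting linear-Gaussian recursion forward over the horizon.

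First I would write the target density by marginalizing over the transition matrices $\{{\bf{H}}_{t+i}\}_{i=0}^{h-1}$ and over the intermediate signals $\{{\bf{x}}_{t+i}^d\}_{i=1}^{h-1}$ of the prediction day. Because the DLM in Eq.~(\ref{eqn:dlm}) is a first-order Markov chain, the joint density over this horizon factorizes into a product of one-step transitions $f({\bf{x}}_{t+i+1}^d|{\bf{x}}_{t+i}^d,{\bf{H}}_{t+i})$, and each transition matrix enters only its own factor. Invoking the assumption $f({\bf{H}}_{t+i}|\cdots)=\delta({\bf{H}}_{t+i}-\hat{\bf{H}}_{t+i})$, every integral over ${\bf{H}}_{t+i}$ sifts out the MAP value, so the density collapses to a chain of one-step transitions in which each matrix is replaced by its deterministic estimate $\hat{\bf{H}}_{t+i}$ from Eq.~(\ref{eqn:probable_H}).

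With the matrices now fixed, each one-step conditional is the Gaussian $\mathcal{N}(\hat{\bf{H}}_{t+i}{\bf{x}}_{t+i}^d,\alpha_{t+i}^{-1}{\bf{I}})$ dictated by the noise model ${\bf{n}}_{t+i}^d\sim\mathcal{N}(0,\alpha_{t+i}^{-1})$. I would then argue by induction on $h$. The base case $h=1$ is immediate from Eq.~(\ref{eqn:dlm}). For the inductive step I would use the standard fact that an affine image of a Gaussian vector plus an independent Gaussian noise is again Gaussian: if ${\bf{x}}_{t+h}^d$ is Gaussian, then ${\bf{x}}_{t+h+1}^d=\hat{\bf{H}}_{t+h}{\bf{x}}_{t+h}^d+{\bf{n}}_{t+h}^d$ is Gaussian with mean $\hat{\bf{H}}_{t+h}$ times the mean of ${\bf{x}}_{t+h}^d$. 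Taking expectations through the recursion and using that every noise term is zero-mean then telescopes to the product $\hat{\bf{H}}_{t+h-1}\cdots\hat{\bf{H}}_t{\bf{x}}_t^d$ for the mean, as claimed. The covariance accumulates as $\sum_{k=0}^{h-1}\alpha_{t+k}^{-1}(\hat{\bf{H}}_{t+h-1}\cdots\hat{\bf{H}}_{t+k+1})(\hat{\bf{H}}_{t+h-1}\cdots\hat{\bf{H}}_{t+k+1})^T$, but since the statement requires only the mean I would not belabor this.

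The one genuine subtlety, and the step I expect to need the most care, is the marginalization in the first paragraph: I must verify that conditioning on the training set ${\bf{X}}_{t:t+h}$ acts \emph{only} through the posteriors of the transition matrices, and that the intermediate signals ${\bf{x}}_{t+1}^d,\ldots,{\bf{x}}_{t+h-1}^d$ on the prediction day are truly integrated out rather than treated as observed. Once the Markov factorization is written correctly, the delta assumption does the real work, and Gaussianity survives precisely because composing linear-Gaussian transitions preserves the marginal as Gaussian—the very property that would fail if the matrices were left random, since a product involving matrix-valued Gaussian factors is not Gaussian, which is exactly why the delta assumption is essential to the statement.
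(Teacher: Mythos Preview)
Your proposal is correct and follows essentially the same route as the paper: apply the delta assumption to collapse the integral over each ${\bf{H}}_{t+i}$ to its MAP value, then run an induction on $h$ that propagates the linear-Gaussian one-step transitions forward, with the mean telescoping to $\hat{\bf{H}}_{t+h-1}\cdots\hat{\bf{H}}_t{\bf{x}}_t^d$. The only cosmetic difference is that the paper carries out the Gaussian marginalization over ${\bf{x}}_{t+l-1}^d$ explicitly (completing the square and applying the matrix inversion lemma to obtain the recursion ${\bf{R}}_{t+l-1}=\alpha_{t+l-1}^{-1}{\bf{I}}+\hat{\bf{H}}_{t+l-1}{\bf{R}}_{t+l-2}\hat{\bf{H}}_{t+l-1}^T$), whereas you invoke the standard linear-Gaussian closure fact directly; your closed-form covariance sum is exactly what that recursion unrolls to.
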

\begin{proof}
See Appendix~\ref{apdx:prediction}.
\end{proof}
Since the mean value of a Gaussian distribution maximizes the distribution, the optimal predictor is
\begin{equation}\label{eqn:pred_proposed}
    \begin{aligned}
        {\bf{x}}^d_{t+h|t}&=\hat {\bf{H}}_{t+h-1}\cdots\hat {\bf{H}}_t{\bf{x}}_t
        :=\hat {\bf{H}}_{t+h-1\leftarrow t}{\bf{x}}^d_t.
    \end{aligned}
\end{equation}
Therefore, the most probable signal ${\bf{x}}_{t+h}^d$ is the successive propagation of the current measurement vector ${\bf{x}}_t^d$ through the most probable transition matrices 
that coincides with a straightforward computation with Eq.~(\ref{eqn:dlm}) ignoring the noise term. Therefore, the prediction for any horizon is just a matrix multiplication.
Algorithm~\ref{algo:prediction} summarizes this.

\begin{figure}[!t]
   \centering
   {\includegraphics[width=0.9\columnwidth]{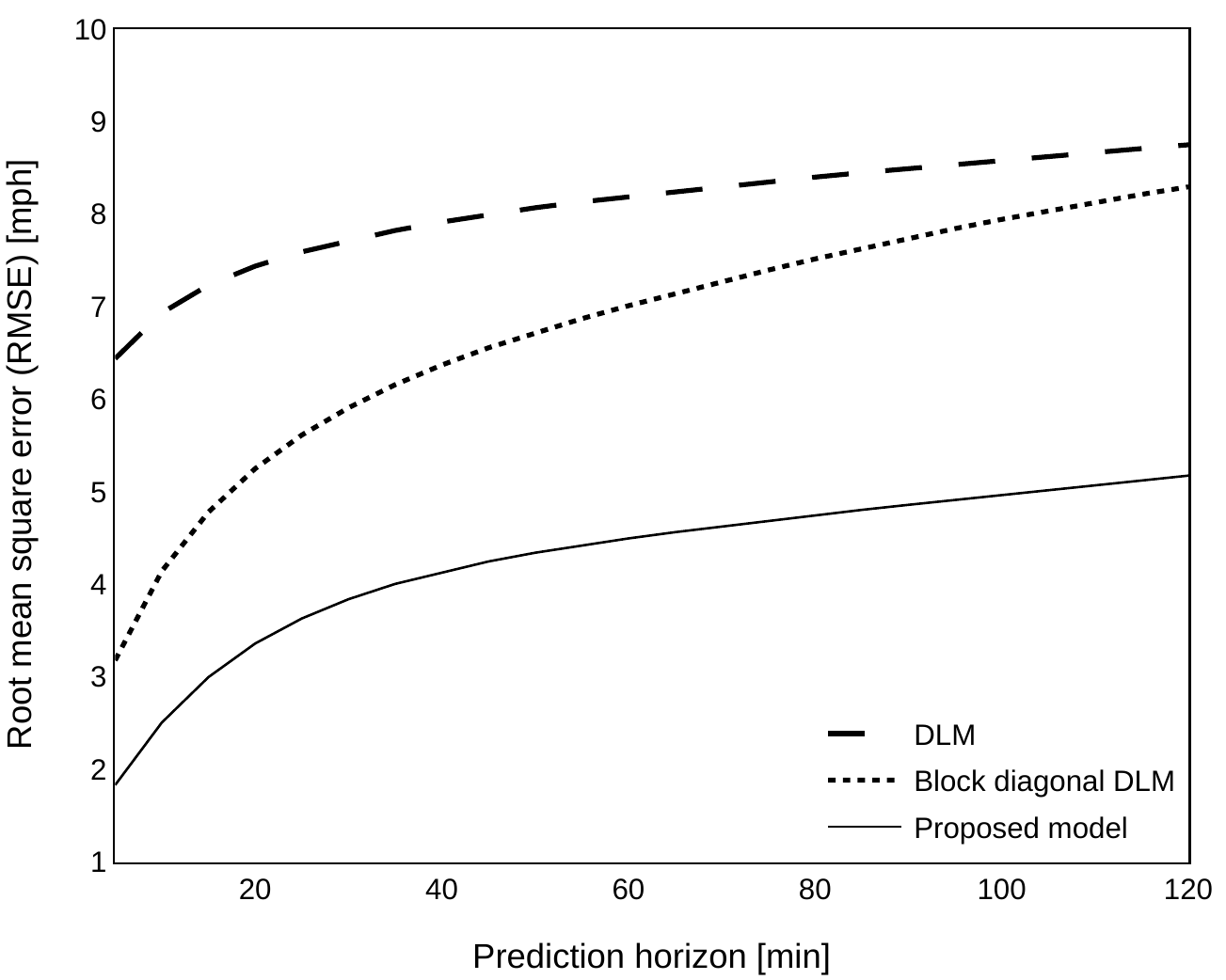}}
   \caption{Prediction accuracy (RMSE) for the three different models on the PEMS-BAY dataset. Each model represents respectively a single DLM (without topological information), separate multiple DLMs for each freeway, and the proposed model (a DLM with topological information).}\label{fig:single_vs_network}
\end{figure}
\begin{figure}[!t]
   \centering
   {\includegraphics[width=0.9\columnwidth]{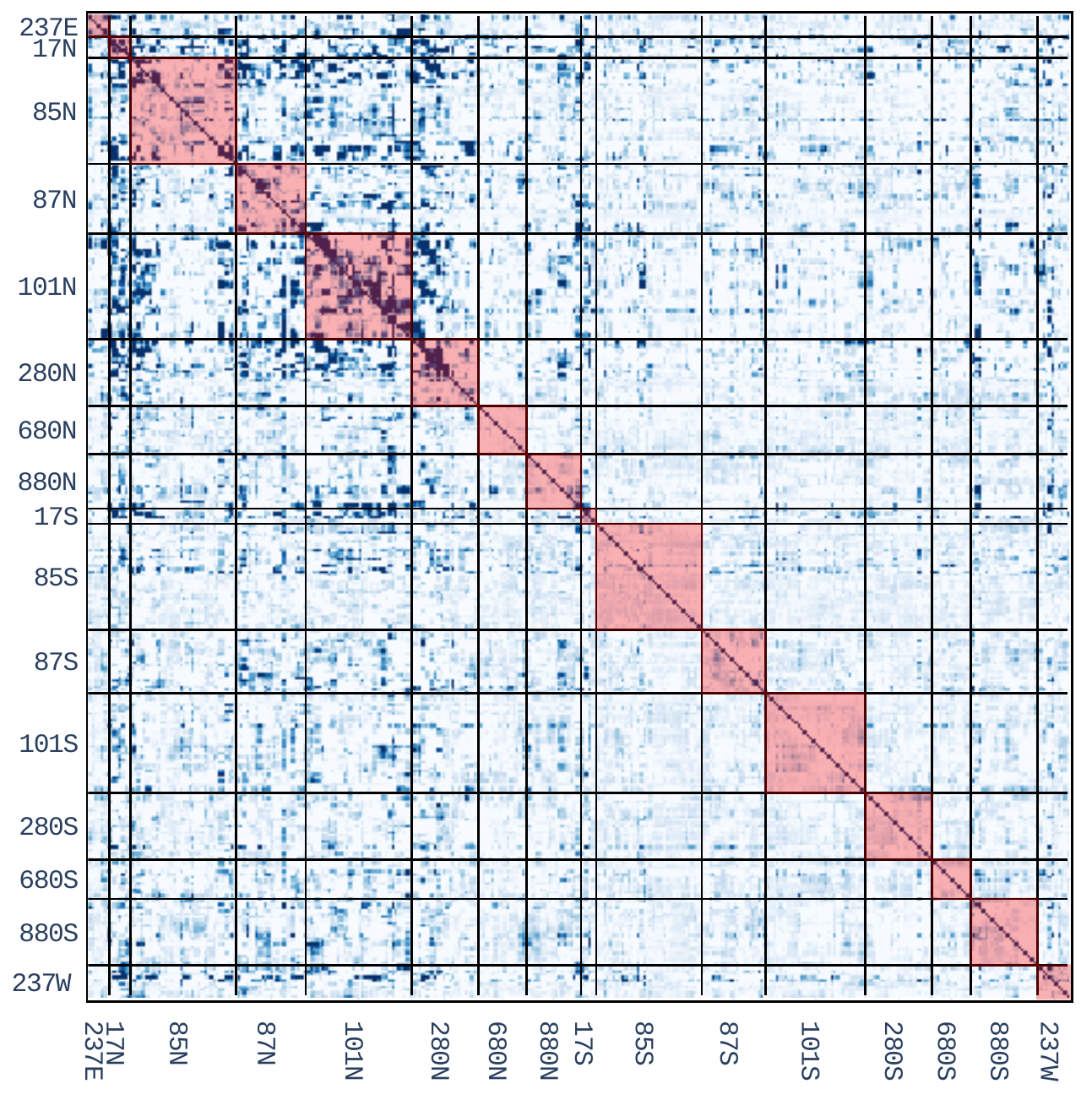}}
   \caption{The heatmap of the elements in an estimated transition matrix ${\bf{H}}_t$ of the proposed model.
   Darker colors represent larger absolute values.
   The sensors are grouped by freeways and ordered from upstream to downstream within each freeway. Each axis shows the name of the freeways. The sensors' correlations within the same freeway are represented as red-shaded areas (block-diagonal elements of the matrix). The separate multiple DLMs only use block diagonal elements in the matrix.
   }
   \label{fig:h_morning}
\end{figure}

\section{Experiments}
\subsection{Settings}

The proposed method was evaluated on different transportation networks. Figure~\ref{fig:test_sites} shows the networks ($\mathcal{G}_1$ and $\mathcal{G}_2$) consisting of respectively 288 and 357 sensors with multiple freeways that are connected through ramps. They experience significant levels of congestion in the morning and evening peaks at various locations. 
These networks connect many origins and destinations with complex demand profiles, creating propagation of congestion that is different in duration, size, and time of occurrence. 
The PEMS-BAY dataset was also used as a benchmark to compare with other state-of-the-art models~\cite{li2018diffusion,wu2019graph}. This data set consists of data measured from 325 sensors (Fig.~\ref{fig:sensornetwork}(a)) on the freeways of San Francisco Bay area. The training and test dataset were constructed in the same way as \cite{li2018diffusion,wu2019graph} to achieve a fair comparison.

The sampling interval of each dataset is 5 minutes by default, and in the following subsection, it is downsampled to 10 and 15 minutes, respectively, for a specific experiment.
Both datasets of networks $\mathcal{G}_1$ and $\mathcal{G}_2$ contain 209 days of speed data, and each of those is divided into a training set and a test set at an 8:2 ratio by default.
Another ratio is applied in Section~\ref{sec:network_prior} for a specific experiment.

We used the root mean square error (RMSE) as an error metric to measure the accuracy of prediction since the solution in Eq.~(\ref{eqn:pred_proposed}) is also the optimal under the minimum mean squares error (MMSE) sense~\cite{kwak2020travel}.
The RMSE of a method with the prediction horizon $h$ is defined as
\begin{equation}
    \text{RMSE}(h,{\text{method}})=\sqrt{\text{mean}({\bf{x}}_{t+h|t}^{\text{method}}-{\bf{x}}_{t+h})^2},
\end{equation}
where the mean value is evaluated over all $t$ in the test set.

For prediction horizons, we set from 5 minutes to 120 minutes every 5 minutes. In our previous work~\cite{kwak2020travel}, on a freeway with a total length of about 60 miles (similar to the longest path of the networks considered here), the actual travel time is about 70 minutes under usual congestion. In the most severe congestion, the maximum travel time is about 100 minutes, and accordingly, we set the maximum prediction horizon to 120 minutes.

All datasets were normalized using the mean and standard deviation of each sensor in the training set.
For a reference, we defined a baseline method that predicts future traffic features assuming that the current traffic does not change over time, i.e., ${\bf{x}}_{x+h|t}^{\text{baseline}}={\bf{x}}_t$.

\subsection{Analysis of network prior}\label{sec:network_prior}
\begin{figure*}[!ht]
   \centering
   \subfigure[Prediction accuracy (97/73 days for training/test set; 43:37 ratio)]{{\includegraphics[width=0.3\textwidth]{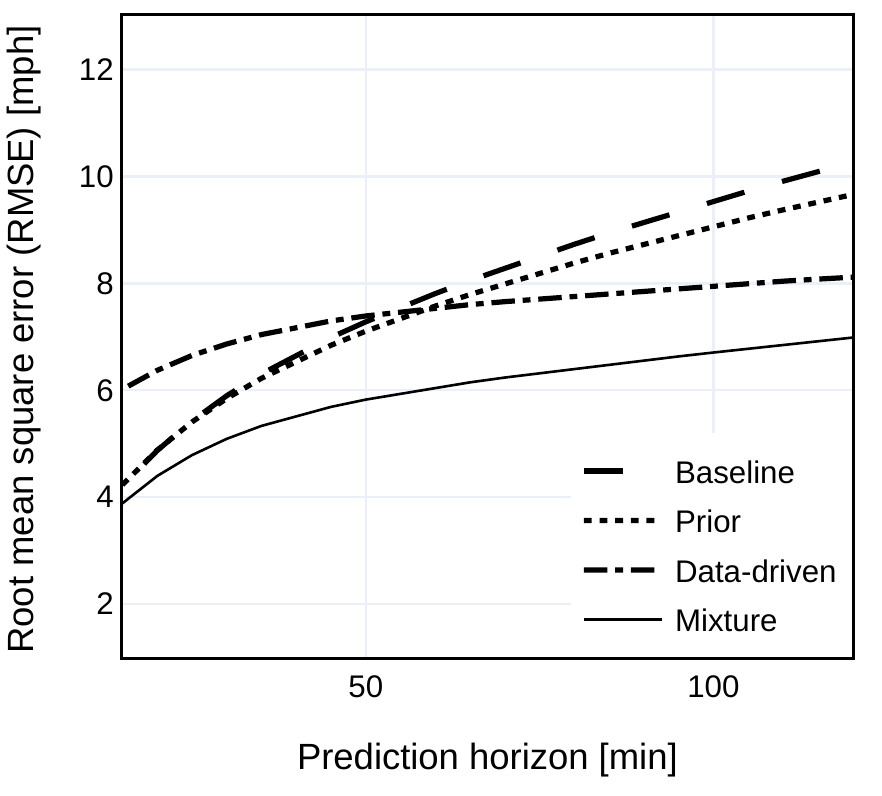}}}\hfill
   \subfigure[Prediction accuracy (194/73 days for training/test set; 73:27 ratio)]{{\includegraphics[width=0.3\textwidth]{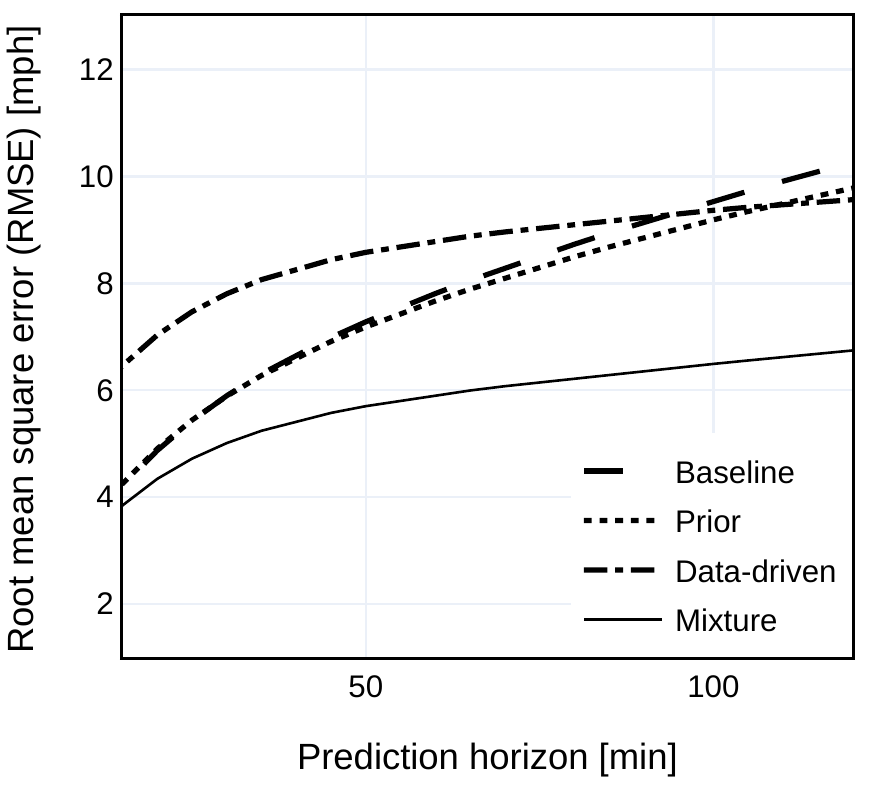}}}\hfill
   \subfigure[Prediction accuracy (292/73 days for training/test set; 8:2 ratio)]{{\includegraphics[width=0.3\textwidth]{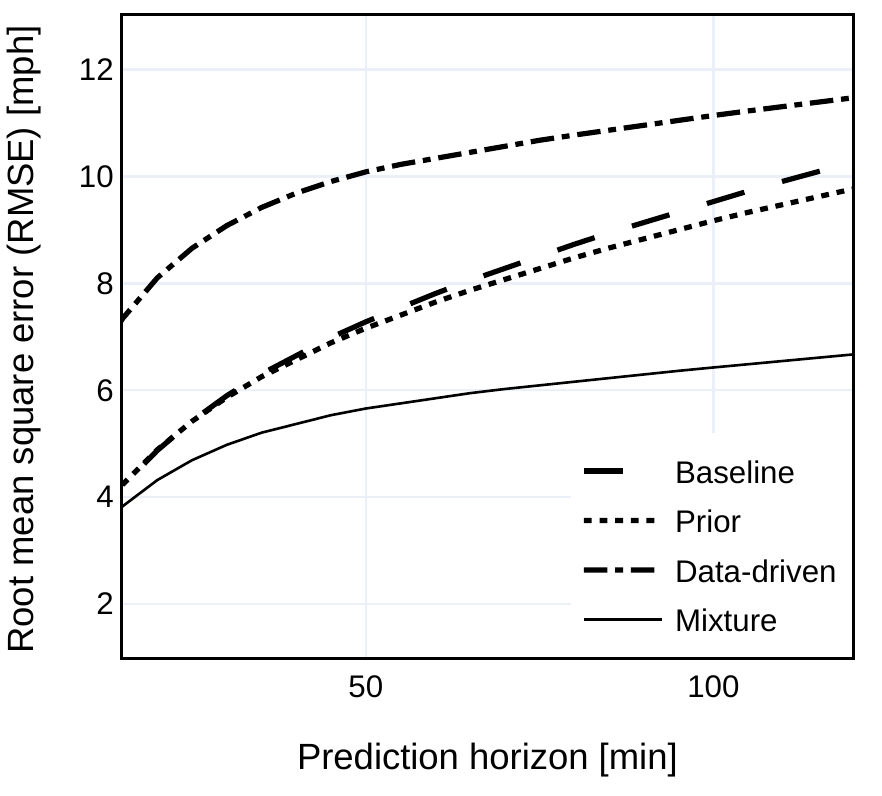}}}
   \subfigure[$c^{\text{data}}_t$ (97/73 days for training/test set; 43:37 ratio)]{{\includegraphics[width=0.3\textwidth]{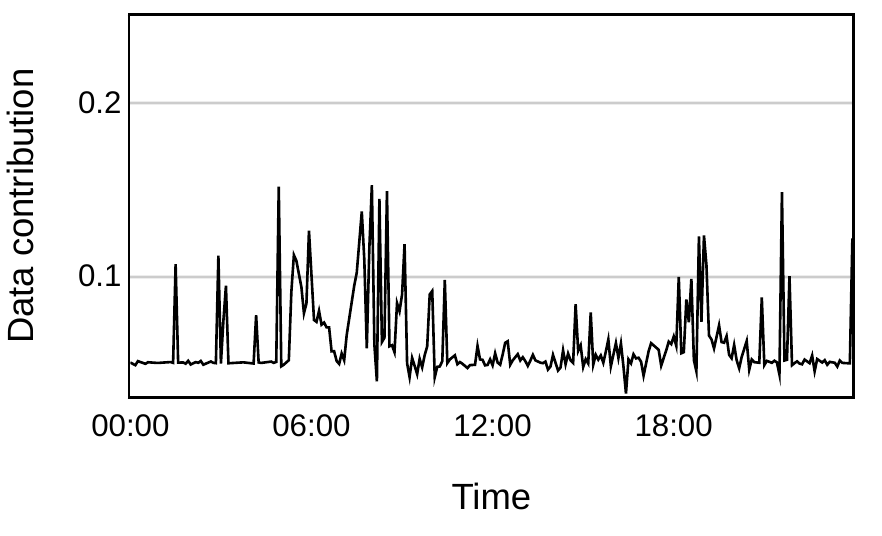}}}\hfill
   \subfigure[$c^{\text{data}}_t$ (194/73 days for training/test set; 73:27 ratio)]{{\includegraphics[width=0.3\textwidth]{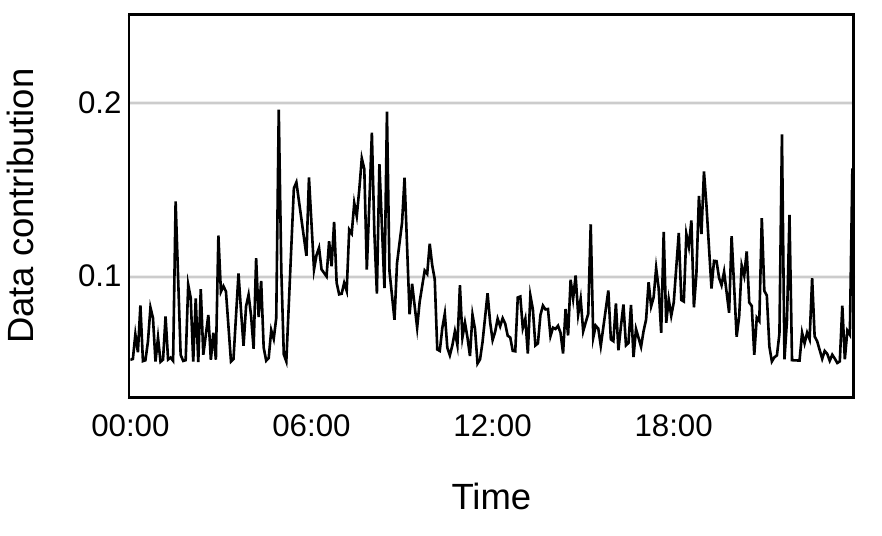}}}\hfill
   \subfigure[$c^{\text{data}}_t$ (292/73 days for training/test set; 8:2 ratio)]{{\includegraphics[width=0.3\textwidth]{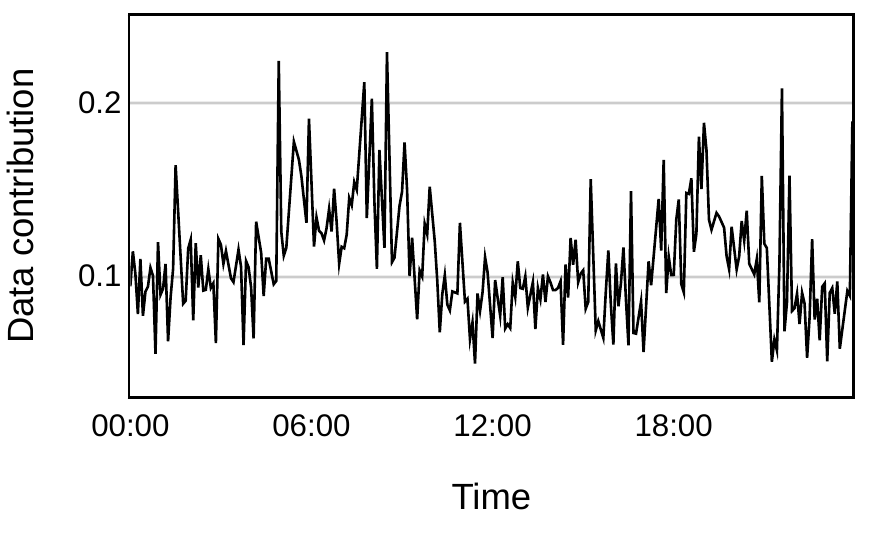}}}
   \caption{Accuracy of the prediction and the data contribution for different training-test set ratio. 
   The baseline method predicts future traffic features assuming that the current traffic does not change over time, i.e., ${\bf{x}}_{x+h|t}^{\text{baseline}}={\bf{x}}_t$.
   }\label{fig:rmse_performance_prior_data}
\end{figure*}

In this section, we show how network prior information contributes to predictive performance. 
Our model generalizes the DLM~\cite{kwak2020travel} to extend the model for a more extensive sensor network using the sensor's topology structure. 
When the DLM is simply used in an extensive network without topology structure information, an overfitting problem can occur. 
We introduce the three following setups to evaluate how well the proposed model utilizes the topology structure avoiding the overfitting problem,
\begin{enumerate}
\item a single DLM for the entire sensor network (without topological information),
\item separate DLMs ($K=5$) for each freeway (block-diagonal DLM),
\item and the proposed model that is a single DLM ($K=5$) with topological information.
\end{enumerate}

As shown in Fig.~\ref{fig:single_vs_network}, the proposed model shows the best performance, followed by block-diagonal DLM and single DLM without topological information.
The proposed model induces the sensor's topological information through heat diffusion kernels to give weights to each element of this transition matrix and focus on estimating more essential components, resulting in it as a sparse matrix, as shown in Fig.~\ref{fig:h_morning}. As a result, it shows excellent performance in long-term prediction by effectively estimating off-diagonal elements (correlation between signals of sensors installed on different freeways) while avoiding the overfitting problem.
In the case of the model with a single DLM, all elements of this matrix are estimated using historical data, while in the case of the model with separate DLMs, only the block diagonal elements are estimated (red shaded area). Therefore, since the former one needs to estimate a much larger number of elements from the data than the latter, an overfitting problem may occur. 
In contrast, in the separate DLMs, the historical data cannot be fully utilized due to the lack of association between sensors belonging to different freeways. In particular, this insufficiency causes degradation of long-term predictions as congestion propagates slowly from one freeway to others.

The low prediction error is obtained only when the topological information is optimally implanted into the DLM. Bayesian inference in our model is the key component to support this process, as it optimally estimates various parameters that characterize the mixing ratio between data and prior, which respectively correspond to DLM and topological information. We set up the following experiment to find test the effectiveness of this estimation method:
\begin{enumerate}
\item the model with measurements (Eq.~(\ref{eqn:likelihood_only})),
\item the model with topological information (Eq.~(\ref{eqn:prior_only})),
\item and the model with both topological information and measurements (Eq.~(\ref{eqn:probable_H})).
\end{enumerate}
For all the above models, we set three different cases that are characterized by different sizes of the training sets with the same test set.

Figures~\ref{fig:rmse_performance_prior_data}(a)-(c) show the prediction accuracy of each case. Interestingly, the model with measurements produced smaller errors when the size of the training set is smaller.
The reason is that each training set period is close to that of the test set with respect to time, which means larger training sets contain measurement that are far from those in the test sets. This may distort the inference process as traffic measurements have seasonal patterns.
On the other hand, the model using only the topological information showed poor performance in predicting the far future because mixture kernels do not represent well the change in traffic conditions due to the volume preservation characteristic.
The model with both topological information and measurements showed the best performance and similar outputs regardless of the size of the training set. 
It shows that Bayesian inference estimates parameters $\alpha_t$ and $\gamma_t$ in Eq.~(\ref{eqn:probable_H}) optimally, extracting maximal information both from data and prior.

Figures~\ref{fig:rmse_performance_prior_data}(d)-(f) show the data contribution which is defined in Eq.~(\ref{eqn:data_contribution}) of the mixture model. As the size of the training set increases, the data contribution increases since the larger training set can generalize measurements more easily. Another important aspect from the results is that the data contribution increases during peak periods such as morning and evening peaks since the traffic volume is most likely not preserved during these periods (therefore, it is difficult to explain it only with diffusion processes).

\subsection{Analysis of different diffusion periods}\label{sec:different_diffusion_periods}
\begin{figure*}[!ht]    
   \centering
   \subfigure[Prediction accuracy ($T_s=$5 min.)]{{\includegraphics[width=0.3\textwidth]{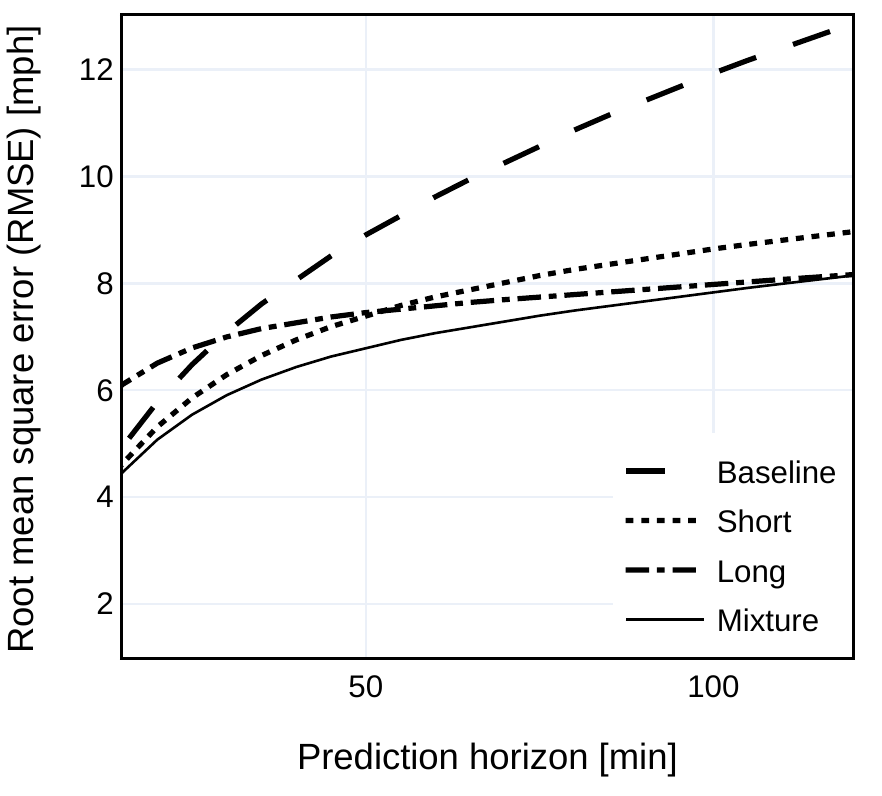}}}\hfill
   \subfigure[Prediction accuracy ($T_s=$10 min.)]{{\includegraphics[width=0.3\textwidth]{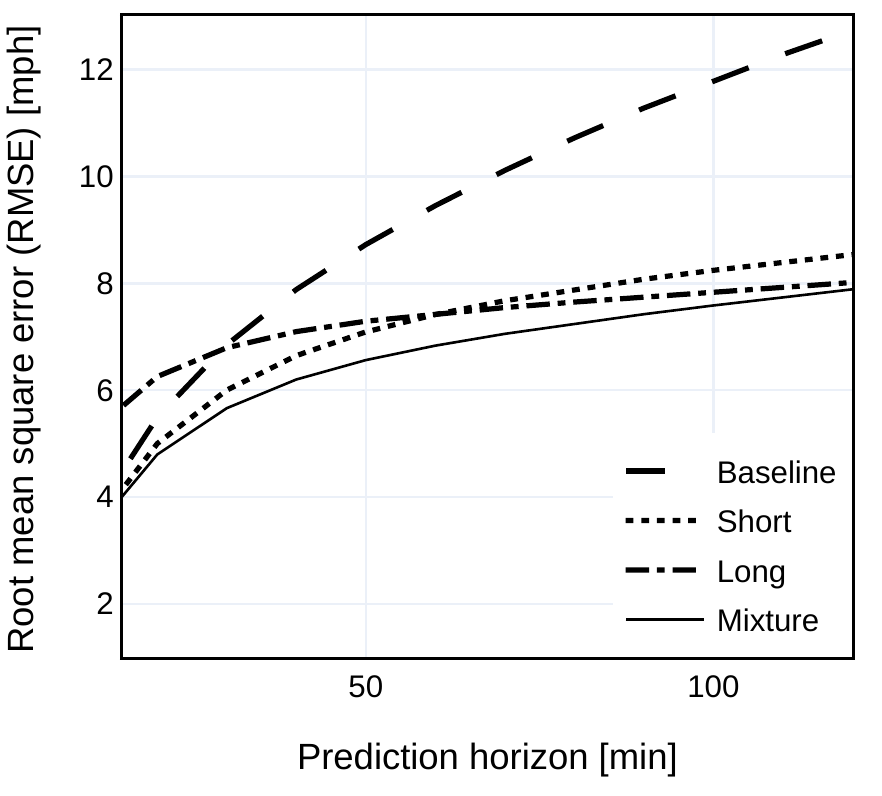}}}\hfill
   \subfigure[Prediction accuracy ($T_s=$15 min.)]{{\includegraphics[width=0.3\textwidth]{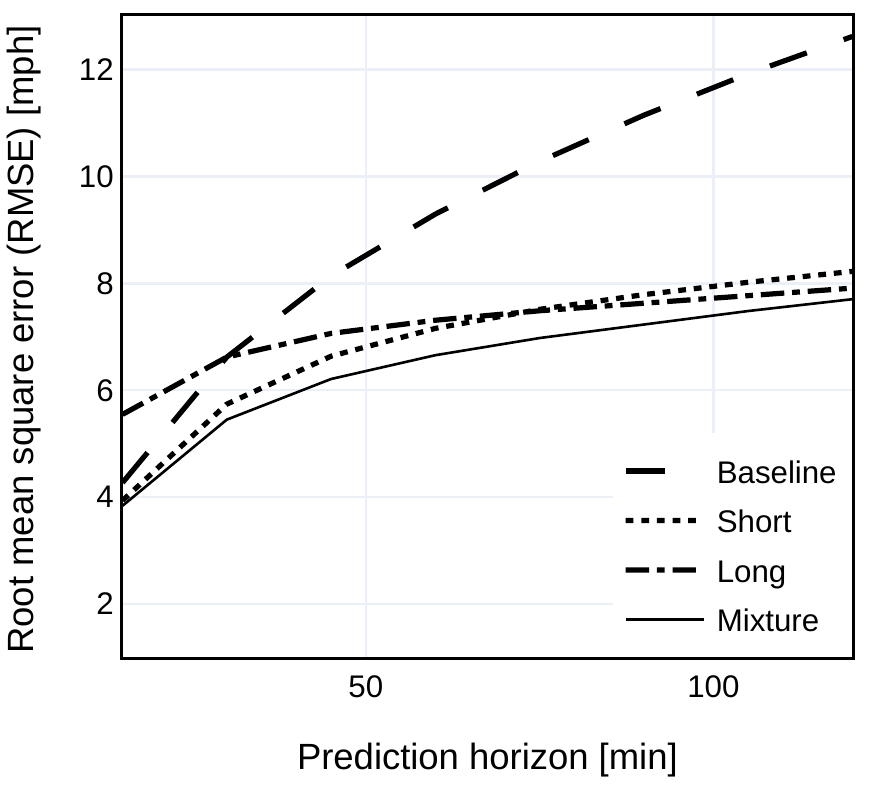}}}
   \subfigure[$\pi_t^{(\tau_0)}/\pi_t^{(\tau_\infty)}$ ($T_s=$5 min)]{{\includegraphics[width=0.3\textwidth]{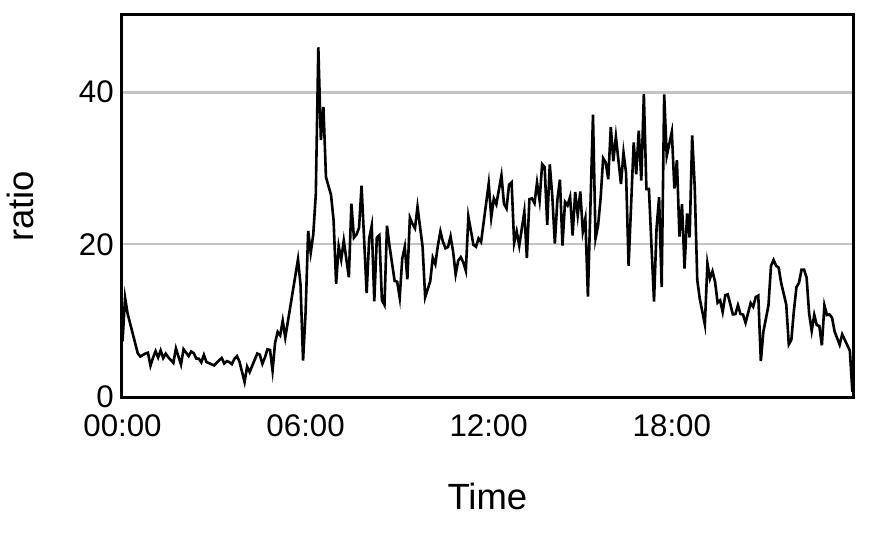}}}\hfill
   \subfigure[$\pi_t^{(\tau_0)}/\pi_t^{(\tau_\infty)}$ ($T_s=$10 min.)]{{\includegraphics[width=0.3\textwidth]{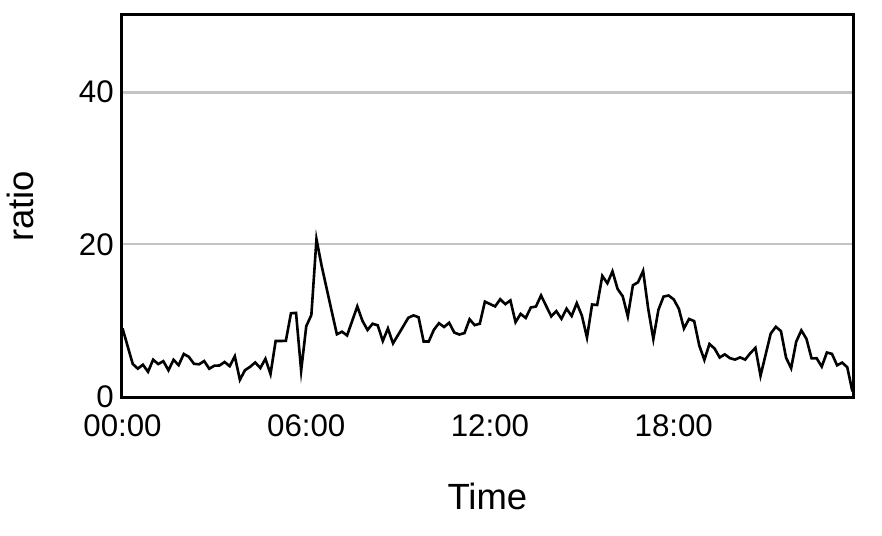}}}\hfill
   \subfigure[$\pi_t^{(\tau_0)}/\pi_t^{(\tau_\infty)}$ ($T_s=$15 min.)]{{\includegraphics[width=0.3\textwidth]{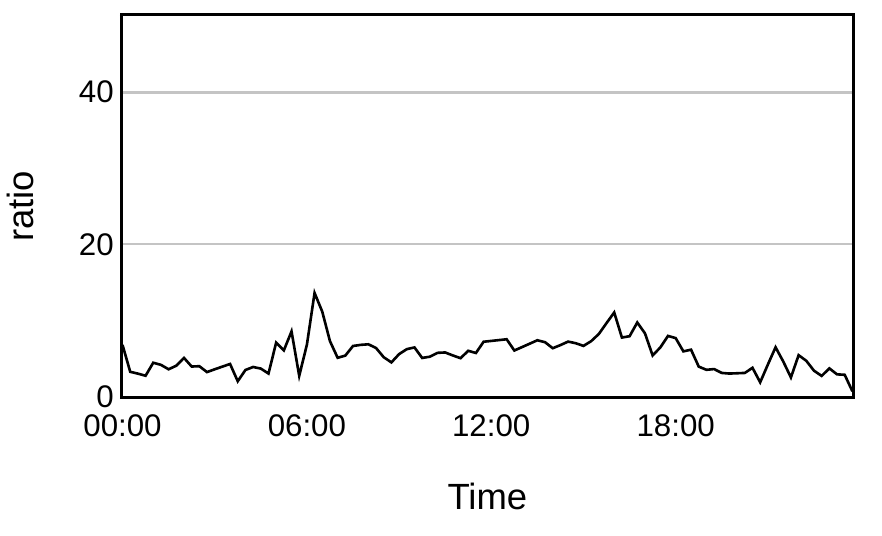}}}
   \caption{Accuracy of the prediction and ratio of the short and long diffusion processes for the same test set with different time intervals.
   The baseline method predicts future traffic features assuming that the current traffic does not change over time, i.e., ${\bf{x}}_{x+h|t}^{\text{baseline}}={\bf{x}}_t$.
   }\label{fig:rmse_performance_long_short}
\end{figure*}
We evaluated the proposed method with different diffusion processes (short, long, and mixture of both) in order to examine how the model of Eq.~(\ref{eqn:diffusion_matrix}) performs in different settings. The transition matrix $\hat{\bf{H}}_t$ was set from Eq.~(\ref{eqn:probable_H}) with three different diffusion priors:
\begin{enumerate}
  \item ${\bf{H}}_t^\mathcal{G}(\mathcal{T})={\bf{H}}^\mathcal{G}(\tau_0)=\lim_{\tau\rightarrow 0}e^{-\tau {\bf{L}}(\mathcal{G})}$ (short diffusion kernel; identity mapping),
  \item ${\bf{H}}_t^\mathcal{G}(\mathcal{T})={\bf{H}}^\mathcal{G}(\tau_\infty)=\lim_{\tau\rightarrow \infty}e^{-\tau {\bf{L}}(\mathcal{G})}$ (long diffusion kernel; averaging),
  \item and ${\bf{H}}_t^\mathcal{G}(\mathcal{T})=\pi_t^{(\tau_0)}{\bf{H}}^\mathcal{G}(\tau_0)+\pi_t^{(\tau_\infty)}{\bf{H}}^\mathcal{G}(\tau_\infty)$ (mixture of short and long diffusion kernels).
\end{enumerate}

We also set three different cases that are characterized by different sampling intervals ($T_s$), 5, 10, and 15 minutes. 
The sampling interval indicates the time duration that corresponds to the one-time incremental (the difference between $t+1$ and $t$). 
The sampling interval is related to the diffusion period $\tau$ as a diffusion kernel expresses how traffic signals diffuse through a graph within a sampling interval.

Figures~\ref{fig:rmse_performance_long_short}(a)-(c) show the prediction accuracy of each diffusion prior on the transportation network~$\mathcal{G}_1$ with the three different sampling intervals.
The predictor with the long diffusion process showed relatively poor performance compared to the baseline method for small prediction horizons, but it was improved when prediction horizons become larger. On the other hand, the one with the short diffusion process showed relatively good performance compared to the baseline method for all prediction horizons; however, it had insufficient performance for large prediction horizons compared to the one with the long diffusion process. The mixture model takes advantage of the two extreme cases, significantly improving the performance for both small and large prediction horizons. Specifically, around 50~minutes prediction horizon in Fig.~\ref{fig:rmse_performance_long_short}(a), the performance of the mixture model is noticeably better than the others, meaning that a mixture of poor predictors can produce a good performance.

We emphasize that the distribution of the diffusion processes ($\Pi_t$) was determined optimally by Bayesian inference. Figures~\ref{fig:rmse_performance_long_short}(d)-(f) show the ratio of the coefficients $\pi_t^{(\tau_0)}$ and $\pi_t^{(\tau_\infty)}$ in the mixture model that corresponds to the short and long diffusion processes, respectively. Although the short diffusion process dominates the whole process, as shown in the figures, the small portion of the long diffusion process contributes to the improvement. 
More importantly, the ratio becomes smaller when the sampling interval increases. It shows that Bayesian inference performs well in optimally determining parameters, since the performance of the mixture model stays similar when the sampling interval is changed.

We also emphasize that the ratio depends on time. 
For example, during the early morning, the diffusion kernel with long diffusion period ($\tau_\infty$) contributes more to the prediction performance
although short diffusion (identity mapping) seems to be a more reasonable choice as there are few changes in traffic during that time. 
However, if the signal values are relatively uniform (in the case of a traffic signal at early morning), taking an average can remove noise while minimizing signal distortion as
${\bf{x}}_{t+1}\approx {\bf{x}}_t\text{ (identity)} \approx \frac{1}{N}{\bf{1}}{\bf{1}}^T{\bf{x}}_t\text{ (averaging; robust to noise)}$.

\subsection{Comparison with state-of-the-art technologies}
We compare the proposed method with other methods using a benchmark dataset: PEMS-BAY dataset~\cite{li2018diffusion}.
For a fair comparison, we use the same settings which are defined in \cite{li2018diffusion} (also same in \cite{xu2018graph})\footnote{Our code is available at: \url{https://github.com/semink/lsdlm/}}.
The models used for the comparison are as follows.

\subsubsection{FC-LSTM (Fully Connected Long Short-Term Memory)} 
This model has been used as a representative reference for time-sequence modeling in deep learning~\cite{hochreiter1997long}.
In general, the LSTM module extracts correlations of signals farther apart in time than the RNN structure. However, this model's disadvantage is that spatial correlations can only be expected to learn directly from data as there is no separate module for extracting spatial relationships of signals.
The RMSE score for PEMS-BAY dataset is retrieved from \cite{li2018diffusion}.
\subsubsection{STGCN} \textcite{yu2018spatio} extracted spatial features with Graph Convolutional Neural Network (CNN) utilizing spectral graph convolution in graph theory. After that, they attached Gated CNN block to extract temporal features.
\subsubsection{DCRNN (Diffusion Convolution Recurrent Neural Network)} 
\textcite{li2018diffusion} constructed a successful predictor by extracting the signal's spatial features from the underlying graph structure by diffusion convolutional layers. 
Compared to STGCN, they designed the filter in the spatial domain directly rather than the graph spectral domain. 
The authors combine this diffusion module to Gated Recurrent Unit (GRU) which is a Recurrent Neural Network (RNN) variant.
\subsubsection{Graph WaveNet}
\textcite{xu2018graph} improved DCRNN by using dilated 1D convolution (also called WaveNet) to extract temporal features in terms of computation time and performance.
\subsubsection{ST-MetaNet}
\textcite{pan2019urban} introduced graph attention network to extract spatial features. They utilize RNN architecture to extract temporal features.

\begin{table}[t!]
\caption{RMSE of different methods\\for PEMS-BAY dataset.}
\label{table:rmse_methods}
\centering
\begin{tabular}{lccc}
\toprule
Horizon &    15 min&    30 min&    60 min\\
\midrule
FC-LSTM~\cite{hochreiter1997long}        &   4.19 &  4.55 &  4.96\\
DCRNN~\cite{li2018diffusion}        &  2.95 &  3.97 &  4.74 \\
STGCN~\cite{yao2018deep}        &  2.96 &  4.27 &  5.69\\
Graph WaveNet~\cite{xu2018graph} &  {\bf{2.74}} &  {\bf{3.70}} &  4.52 \\
ST-MetaNet~\cite{pan2019urban}        &  2.90 &  4.02 &  5.06 \\
Proposed       &  2.90 &  3.77 &  {\bf{4.44}} \\
\bottomrule
\end{tabular}
\end{table}

Table~\ref{table:rmse_methods} shows the RMSE of each model and our proposed method.
We confirm that the performance of the proposed method reaches that of state-of-the-art methods based on a complex deep learning architecture. It even performs better for long-term prediction as we model based on DLM that explicitly expresses the daily periodicity of traffic signals.
For example, the RMSEs of our proposed method for 90 and 120 min horizons are respectively 4.70 and 5.26, while these are 5.26 and 6.02 with the pre-trained DCRNN model.\footnote{As GraphWaveNet predicts all the horizons at once (not recursive), we could not use the pre-trained model for the longer horizons. As a result, we choose DCRNN which shows the second-best result on 60 min horizon.}

Our proposed method requires lower computational effort compared to the others.
Also, it infers the majority of the parameters ($N^2$) analytically by Eq.~(\ref{eqn:probable_H}).
The method only requires numerical computation when it solves the optimization problem~(\ref{eqn:evidence_maximization}) to infer $K+2$ parameters, which has $O(K^2)$ complexity, where $K^2$ is noticeably smaller than $N^2$.
Note that the hyperparameters are optimally estimated by solving the optimization problem~(\ref{eqn:evidence_maximization}) rather than the cross-validation method. As hyperparameter tuning is an expensive task, it can be a major advantage of the proposed method.

On the other hand, all state-of-the-art methods require heavy numerical computations to train a large number of parameters as they are based on deep-neural-net architectures.
Our method successfully infers all parameters at the time scale of minutes with CPU computations, which is noticeably shorter than other DNN based methods with GPU computations as shown in Table~\ref{table:computation} (note that the DNN based methods required from 50 epochs to 100 epochs to converge).

\begin{table}[t!]
\caption{Computation costs for training on the PEMS-BAY dataset}
\label{table:computation}
\centering
\begin{tabular}{lr}
\toprule
Model &    Training(s)\\
\midrule
DCRNN~\cite{li2018diffusion}        &  750 (per epoch)\\
Graph WaveNet~\cite{xu2018graph} &  580 (per epoch)\\
Proposed       &  760 (total)\\
\bottomrule
\end{tabular}
\end{table}

Another advantage of our model compared to the deep-learning-based architectures is that only a small number of parameters need to be decided heuristically. This can provide easy scalability to apply our model to other traffic datasets or datasets with similar properties to traffic data (daily periodicity).
For example, in our model, the parameters to be determined before training are the threshold constant $\kappa$, the kernel width $\sigma$ to build a proper graph, and the number of diffusion processes $K$ to determine how many diffusion processes should be mixed. 
We empirically choose the constants $\kappa$ and $\sigma$ such that the corresponding graph $\mathcal{G}$ is a $k$-vertex-connected graph with a small number $k$.
For the number of diffusion processes $K$, we set $K=5$ for the PEMS-BAY dataset but the prediction performance is not sensitive to the parameter ($\pm0.01$ minutes changes of the RMSE score from $K=3$ to $K=7$).

\section{Conclusion}
In this paper, we proposed a method for predicting traffic signals in transportation sensor networks. 
We successfully integrated topological information of the sensor network into a data-driven model by assuming that the parameters in the model are supported by the mixture of diffusion kernels with uncertainties.
We exploited the Bayesian inference to optimally determine the parameters that characterize the distribution of diffusion processes and the importance of measurements against prior information. 
The importance varies with time, and we discover that the data are relatively more important, especially for the peak period.
Most importantly, the proposed method reached accurate prediction at the level of state-of-the-art methods with less computational effort. 
It particularly shows excellent performance in long-term predictions by exploiting DLM's periodicity modeling.
Our method can be applicable for predicting graph signals exhibiting daily patterns such as weather or energy consumption.
For future works, we may improve the short-term prediction performance if we give more valuable prior information (e.g., graph structure more suitable for prediction; currently, it only depends on topology), or if it is possible to derive all inference processes (especially the marginalization steps in Eq.~(\ref{eqn:evidence-intermeditate}) and (\ref{eqn:evidence_marginalization})) with a non-linear model overcoming the limitation of linear models.

\printbibliography


%

\appendix

\subsection{Volume conservation of mixture of heat diffusion}\label{apdx:volume_conservation_heat_diffusion}
By definition (in Eq.~(\ref{eqn:laplacian})), the graph Laplacian ${\bf{L}}(\mathcal{G})$ has an eigenvector $\frac{1}{\sqrt{N}}{\bf{1}}$ with the corresponding eigenvalue $0$. Let an eigen-decomposition of the matrix be
\begin{equation}
    {\bf{L}}(\mathcal{G})={\bf{V}}{\bf{D}}{\bf{V}}^T,
\end{equation}
where the orthonormal matrix ${\bf{V}}$ and the diagonal matrix ${\bf{D}}$ contain eigenvectors and corresponding eigenvalues, respectively.
Since the orthonormal matrix ${\bf{V}}$ contains the eigenvector $\frac{1}{\sqrt{N}}{\bf{1}}$,
\begin{equation}
\begin{aligned}
{\bf{1}}^T\tilde{\bf{x}}^d_{t+1}(\tau)&\overset{(\ref{eqn:internal_diffusion})}{=}{\bf{1}}^T{\bf{H}}^{\mathcal{G}}(\tau){\bf{x}}^d_{t}\\
&\overset{(\ref{eqn:heat_diffusion_model})}{=}{\bf{1}}^T e^{-\tau{\bf{L}}(\mathcal{G})}{\bf{x}}_t^d={\bf{1}}^T{\bf{V}} e^{-\tau{\bf{D}}}{\bf{V}}^T{\bf{x}}_t^d\\
&=\frac{N}{\sqrt{N}}\frac{1}{\sqrt{N}}{\bf{1}}^T{\bf{x}}_t^d={\bf{1}}^T{\bf{x}}_t^d.
\end{aligned}
\end{equation}
Therefore,
\begin{equation}
    \begin{aligned}
    {\bf{1}}^T\tilde{\bf{x}}^d_{t+1}(\mathcal{T})&={\bf{1}}^T{\bf{H}}^{\mathcal{G}}(\mathcal{T}){\bf{x}}^d_{t}={\bf{1}}^T\left(\sum_\tau \pi^{(\tau)}{\bf{H}}^{\mathcal{G}}(\tau)\right){\bf{x}}^d_{t}\\
    &=\sum_\tau \pi^{(\tau)}{\bf{1}}^T{\bf{H}}^{\mathcal{G}}(\tau){\bf{x}}^d_{t}=\sum_\tau \pi^{(\tau)}{\bf{1}}^T{\bf{x}}^d_{t}\\
    &={\bf{1}}^T{\bf{x}}^d_{t}\sum_\tau \pi^{(\tau)}={\bf{1}}^T{\bf{x}}^d_{t}.
    \end{aligned}
\end{equation}

\subsection{Evidence}\label{apdx:evidence}
\begin{equation}\label{eqn:evidence_marginalization}
    \begin{aligned}
&f({\bf{X}}_{t+1}|{\bf{X}}_t,\alpha_t,{\Pi}_t)\\
&=\int f({\bf{X}}_{t+1}|{\bf{X}}_t,{\bf{H}}_t,\alpha_t) f({\bf{H}}_t|{\Pi}_t)d{\bf{H}}_t
\\&\propto\int e^{-\frac{1}{2}\alpha_t\text{tr}\{({\bf{X}}_{t+1}-{\bf{H}}_t{\bf{X}}_t)({\bf{X}}_{t+1}-{\bf{H}}_t{\bf{X}}_t)^T\}}\\
&\qquad\cdot e^{-\frac{1}{2}\gamma_t\text{tr}\{({\bf{H}}_{t}-{\bf{H}}_t^{\mathcal{G}}(\mathcal{T}))({\bf{H}}_{t}-{\bf{H}}_t^{\mathcal{G}}(\mathcal{T}))^T\}}d{\bf{H}}_t
\\&\propto e^{-\frac{1}{2}\alpha_t\left({\bf{X}}_{t+1}({\bf{I}}-\alpha_t{\bf{X}}_t^T{\boldsymbol{\Sigma}}_t{\bf{X}}_t){\bf{X}}_{t+1}^T-2\gamma_t{\bf{H}}_t^{\mathcal{G}}(\mathcal{T}){\boldsymbol{\Sigma}}_t{\bf{X}}_t{\bf{X}}_{t+1}^T\right)}
\\&\qquad\cdot\int (2\pi)^{-\frac{N^2}{2}}|{\boldsymbol{\Sigma}}_t|^{-\frac{N}{2}}e^{-\frac{1}{2}\text{tr}\{({\bf{H}}_t-\hat {\bf{H}}_t){\boldsymbol{\Sigma}}_t^{-1}({\bf{H}}_t-\hat {\bf{H}}_t)^T\}}d{\bf{H}}_t
\\&\propto e^{-\frac{1}{2}\alpha_t\left({\bf{X}}_{t+1}({\bf{I}}-\alpha_t{\bf{X}}_t^T{\boldsymbol{\Sigma}}_t{\bf{X}}_t){\bf{X}}_{t+1}^T-2\gamma_t{\bf{H}}_t^{\mathcal{G}}(\mathcal{T}){\boldsymbol{\Sigma}}_t{\bf{X}}_t{\bf{X}}_{t+1}^T\right)}
\\&\propto e^{-\frac{1}{2}\text{tr}\{\alpha_t({\bf{X}}_{t+1}-{\bf{H}}_t^{\mathcal{G}}(\mathcal{T}){\bf{X}}_t)({\bf{I}}+\alpha_t\gamma_t^{-1}{\bf{X}}_t^T{\bf{X}}_t)^{-1}({\bf{X}}_{t+1}-{\bf{H}}_t^{\mathcal{G}}(\mathcal{T}){\bf{X}}_t)^T\}},
\end{aligned}
\end{equation}
where ${\boldsymbol{\Sigma}}_t^{-1}=\alpha_t{\bf{X}}_t{\bf{X}}_t^T+\gamma_t{\bf{I}}$. 

\subsection{Posterior of ${\bf{x}}_{t+h}$}\label{apdx:prediction}
When $h=1$,
\begin{equation}\label{eqn:evidence}
\begin{aligned}
&f({\bf{x}}_{t+1}|{\bf{x}}_t,{\bf{X}}_{t+1},{\bf{X}}_t)
\\&=\int f({\bf{x}}_{t+1}|{\bf{x}}_t,{\bf{H}}_t,\alpha_t)f({\bf{H}}_t|{\bf{X}}_{t+1},{\bf{X}}_t,\alpha_t,\gamma_t,\Pi_t,\mathcal{G})d{H}_t
\\&=f({\bf{x}}_{t+1}|{\bf{x}}_t,\hat {\bf{H}}_t,\alpha_t)=\mathcal{N}(\hat {\bf{H}}_t{\bf{x}}_t,\alpha_t^{-1}{\bf{I}}).
\end{aligned}
\end{equation}
Assume the statement is true for $h=l-1$ so that
\begin{equation}\label{eqn:km1assumption}
\begin{aligned}
f({\bf{x}}_{t+l-1}|{\bf{x}}_t,{\bf{X}}_{t:t+l-1})=\mathcal{N}(\hat {\bf{H}}_{t+l-2\leftarrow t}{\bf{x}}_t,{\bf{R}}_{t+l-2}),
\end{aligned}
\end{equation}
where $\hat {\bf{H}}_{t+l-2\leftarrow t}=\hat {\bf{H}}_{t+l-2}\hat {\bf{H}}_{t+l-3}\cdots \hat {\bf{H}}_{t}.$ By the chain rule,
\begin{equation}
    \begin{aligned}
&f({\bf{x}}_{t+l}|{\bf{x}}_t,{\bf{X}}_{t:t+l})\\
&=\int f({\bf{x}}_{t+l}|{\bf{x}}_{t+l-1},{\bf{X}}_{t+l-1}) f({\bf{x}}_{t+l-1}|{\bf{x}}_t,{\bf{X}}_{t:t+l-1})d{\bf{x}}_{t+l-1}.
\end{aligned}
\end{equation}
Since
\begin{equation}
    \begin{aligned}&f({\bf{x}}_{t+l}|{\bf{x}}_{t+l-1},{\bf{X}}_{t+l},{\bf{X}}_{t+l-1})
    \\&\qquad\qquad\qquad\qquad\qquad\overset{(\ref{eqn:evidence})}{=}\mathcal{N}(\hat {\bf{H}}_{t+l-1}{\bf{x}}_{t+l-1},\alpha_{t+l-1}^{-1}{\bf{I}}),\\
&f({\bf{x}}_{t+l-1}|{\bf{x}}_t,{\bf{X}}_{t+l-1},\cdots,{\bf{X}}_t)\overset{(\ref{eqn:km1assumption})}{=}\mathcal{N}(\hat {\bf{H}}_{t+l-2\leftarrow t}{\bf{x}}_t,{\bf{R}}_{t+l-2}),
\end{aligned}
\end{equation}

\begin{equation}\label{eqn:evidence-intermeditate}
    \begin{aligned}&f({\bf{x}}_{t+l}|{\bf{x}}_t,{\bf{X}}_{t+l},\cdots,{\bf{X}}_t)\\
&=\int \mathcal{N}(\hat {\bf{H}}_{t+l-1}{\bf{x}}_{t+l-1},\alpha_{t+l-1}^{-1}{\bf{I}})
\\&\qquad\qquad\qquad\qquad\cdot\mathcal{N}(\hat {\bf{H}}_{t+l-2\leftarrow t}{\bf{x}}_t,{\bf{R}}_{t+l-2})d{\bf{x}}_{t+l-1}
\\&\propto \int 
exp\Big(-{\frac{1}{2}} \big\{\alpha_{t+l-1}({\bf{x}}_{t+l}-
\hat {\bf{H}}_{t+l-1}{\bf{x}}_{t+l-1})^T
\\&\qquad\qquad\qquad\qquad\cdot({\bf{x}}_{t+l}-\hat {\bf{H}}_{t+l-1}{\bf{x}}_{t+l-1})
\\&+({\bf{x}}_{t+l-1}-\hat {\bf{H}}_{t+l-2\leftarrow t}{\bf{x}}_{t})^T{\bf{R}}_{t+l-2}
\\&\qquad\qquad\qquad\qquad\cdot({\bf{x}}_{t+l-1}-\hat {\bf{H}}_{t+l-2\leftarrow t}{\bf{x}}_{t})\big\}\Big)d{\bf{x}}_{t+l-1}
\\&\propto exp\Big(-\frac{1}{2}\big(\alpha_{t+l-1}{\bf{x}}_{t+l}^T{\bf{x}}_{t+l}
\\&\qquad\qquad-(\alpha_{t+l-1}\hat {\bf{H}}_{t+l-1}^T{\bf{x}}_{t+l}+{\bf{R}}_{t+l-2}^{-1}\hat {\bf{H}}_{t+l-2\leftarrow t}{\bf{x}}_t)^T
\\&\qquad\qquad\cdot(\alpha_{t+l-1}\hat {\bf{H}}_{t+l-1}^T\hat {\bf{H}}_{t+l-1}+{\bf{R}}_{t+l-2}^{-1})^{-1}
\\&\qquad\qquad\cdot( \alpha_{t+l-1}\hat {\bf{H}}_{t+l-1}^T{\bf{x}}_{t+l}+{\bf{R}}_{t+l-2}^{-1}\hat {\bf{H}}_{t+l-2\leftarrow t}{\bf{x}}_t)\big)\Big)
\\&\propto exp\Big(-\frac{1}{2}\alpha_{t+l-1}
\\&\cdot\big({\bf{x}}_{t+l}^T({\bf{I}}-\alpha_{t+l-1}\hat {\bf{H}}_{t+l-1}(\alpha_{t+l-1}\hat {\bf{H}}_{t+l-1}^T\hat {\bf{H}}_{t+l-1}
\\&\qquad\qquad \qquad\qquad\qquad\qquad+{\bf{R}}_{t+l-2}^{-1})^{-1}\hat {\bf{H}}_{t+l-1}^T){\bf{x}}_{t+l}
\\&-2{\bf{x}}_{t+l}^T\hat {\bf{H}}_{t+l-1}(\alpha_{t+l-1}\hat {\bf{H}}_{t+l-1}^T\hat {\bf{H}}_{t+l-1}+{\bf{R}}_{t+l-2}^{-1})^{-1}\\&\qquad\qquad\qquad\qquad \qquad\qquad\qquad\cdot {\bf{R}}_{t+l-2}^{-1}\hat {\bf{H}}_{t+l-2\leftarrow t}{\bf{x}}_t\big)\Big).
\end{aligned}
\end{equation}
Applying matrix inversion lemma, Eq.~(\ref{eqn:evidence-intermeditate}) becomes
\begin{equation}
\begin{aligned}
&exp\Big(-\frac{1}{2}\alpha_{t+l-1}\\&\cdot\big({\bf{x}}_{t+l}^T({\bf{I}}+\alpha_{t+l-1}\hat {\bf{H}}_{t+l-1}{\bf{R}}_{t+l-2}\hat {\bf{H}}_{t+l-1}^T)^{-1}{\bf{x}}_{t+l}
\\&-2{\bf{x}}_{t+l}^T({\bf{I}}+\alpha_{t+l-1}\hat {\bf{H}}_{t+l-1}{\bf{R}}_{t+l-2}\hat {\bf{H}}_{t+l-1}^T)^{-1}
\\&\qquad\qquad\qquad\qquad\qquad\qquad\qquad\cdot\hat {\bf{H}}_{t+l-1}\hat {\bf{H}}_{t+l-2\leftarrow t}{\bf{x}}_t\big)\Big)
\\&\propto exp\Big(-\frac{1}{2}({\bf{x}}_{t+l}-\hat {\bf{H}}_{t+l-1}\hat {\bf{H}}_{t+l-2\leftarrow t}{\bf{x}}_t)^T
\\&\quad\qquad\qquad\qquad\cdot {\bf{R}}_{t+l-1}^{-1}({\bf{x}}_{t+l}-\hat {\bf{H}}_{t+l-1}\hat {\bf{H}}_{t+l-2\leftarrow t}{\bf{x}}_t)\Big),
\end{aligned}
\end{equation}
where ${\bf{R}}_{t+l-1}=\alpha_{t+l-1}^{-1}{\bf{I}}+\hat {\bf{H}}_{t+l-1}{\bf{R}}_{t+l-2}\hat {\bf{H}}_{t+l-1}^T$
and by definition $\hat {\bf{H}}_{t+l-1\leftarrow t}=\hat {\bf{H}}_{t+l-1}\hat {\bf{H}}_{t+l-2\leftarrow t}$, so 
\begin{equation}
 f({\bf{x}}_{t+l}|{\bf{x}}_t,{\bf{X}}_{t+l},\cdots,{\bf{x}}_t)=\mathcal{N}(\hat {\bf{H}}_{t+l-1\leftarrow t}{\bf{x}}_t,{\bf{R}}_{t+l-1}).
\end{equation}
Finally ${\bf{x}}_{t+h|t}=\hat {\bf{H}}_{t+h-1}\cdots\hat {\bf{H}}_t{\bf{x}}_t$.



\ifCLASSOPTIONcaptionsoff
  \newpage
\fi

\end{document}